\begin{document}
\title{An alternative proof of the vulnerability of retrieval in high intrinsic dimensionality neighborhood}
\author{Teddy Furon\\
Univ Rennes, Inria, CNRS, IRISA - France}

\maketitle

%%%%%%%%%%%%%%%%%%%%%%%%%%%%%%%%
% THEOREMS
%%%%%%%%%%%%%%%%%%%%%%%%%%%%%%%%
\theoremstyle{plain}
\newtheorem{theorem}{Theorem}[section]
\newtheorem{proposition}[theorem]{Proposition}
\newtheorem{lemma}[theorem]{Lemma}
\newtheorem{corollary}[theorem]{Corollary}
\newtheorem{assumption}[theorem]{Assumption}
\theoremstyle{definition}
\newtheorem{definition}[theorem]{Definition}
\theoremstyle{remark}
\newtheorem{remark}[theorem]{Remark}

%\begin{document}

%%%%%%%%%%%%%%%%%%%%%%%%%%%%%%%%%%%

\def \real{\mathbb{R}}
\def \Prob{\mathbb{P}}
\def \Exp{\mathbb{E}}
\def \Var{\mathbb{V}}
\def \x{\mathbf{x}}
\def \z{\mathbf{z}}
\def \q{\mathbf{q}}
\def \t{\mathbf{t}}
\def \y{\mathbf{y}}
\def \kt{k_t}
\def \kx{k_x}
\def \ky{k_y}
\def \dt{t}
\def \dx{x}
\def \dy{y}
\def \ie{\textit{i.e.}}
\def \D{\mathcal{D}}

\def \eg {\textit{e.g.}}
\def \rr {0.85}

\begin{abstract}
This paper investigates the vulnerability of the nearest neighbors search, which is a pivotal tool in data analysis and machine learning. 
The vulnerability is gauged as the relative amount of perturbation that an attacker needs to add onto a dataset point in order to modify its neighbor rank 
w.r.t. a query.
The statistical distribution of this quantity is derived from simple assumptions.
Experiments on six large scale datasets validate this model up to some outliers which are explained in term of violations of the assumptions.
\end{abstract}

% !TEX root = alternate.tex

\section{Introduction}
The search of Nearest Neighbors (NN) not only appears in the top-10 algorithms in data mining according to~\citep{wu2008top}, but it is also at the root of many machine learning methods ranging from detection~\citep{gu2019statistical}, classification~\citep{pmlr-v108-yang20b}, and regression~\citep{10.2307/2242230}.
\citet{8384208} explain its popularity because it is a flexible, non parametric, interpretable, and computationally efficient method.   

As such, many security and safety-oriented applications rely on information retrieval operated by the NN search.
The existence of an attacker intentionally circumventing the NN search becomes a plausible threat. 
For instance, social networks often compare uploaded content to a collection of materials to detect `forbidden' content (terrorism, pedopornography, fake news, \textit{etc}) or to identify copyright holders (content monetization). The detection of outliers to spot fraud in financial transactions or intrusion in security systems is another example of critical applications of the NN search~\citep{gu2019statistical}.

The NN search recently went to the rescue of other machine learning techniques.
At training time, \citet{pmlr-v119-bahri20a} use the NN search to sanitize the tranining set detecting mislabeled data, while \citet{ijcai2021-437} prevent poisoning attacks.
At test time, \citet{nguyen2018adversarial} use it as a defense against adversarial examples to robustify regression.  
\citet{Cohen_2020_CVPR}, \citet{papernot2018deep}, and~\citet{ma2018characterizing} share the same idea to detect adversarial images by considering them as outliers.
They gauge the confidence of the classifier according to the neighbors of the input in the training set in some activation layer of the neural network. 
Note that this defense has been under scrutiny in the works of~\citet{sitawarin2019robustness} and~\citet{pmlr-v80-athalye18a}.
\citet{wang2018analyzing} analyze the theoretical robustness of classification based on NN search against adversarial attacks and propose in~\citep{pmlr-v108-yang20b} to sieve the training set to prune out isolated points that cause vulnerability.
 
The versatility and ubiquity of the NN search motivate research on its intrinsic vulnerability, independently of any machine learning procedure.
It is of utmost importance to analyze the security of this key primitive before its deployment in applications where security matters.  
Our article follows the approach funded by~\citet{9194069}.
It especially deals with vectors (features extracted from objects) living in high dimensional space and compared with the Euclidean distance.
It quantifies the amount of perturbation in this feature space needed to delude the retrieval mechanism in terms of modification of the nearest neighbors.
This is exemplified by the scenario where the attacker modifies the $\kx$ nearest neighbor in the dataset to reduce its neighboring rank to a target $\kt<\kx$.
 
Under some broad assumptions, our contribution models the necessary amount of perturbation as a random variable and reveals its precise distribution depending on ranks $\kx$ and $\kt$, the size $n$ of the dataset, and on a scalar index $\ell$ characterising the distances of the dataset points in the neighborhood. 
Section~\ref{sec:Stat} shows that this r.v. converges to a simple distribution when the dataset size $n$ grows to infinity.
Experiments in Sect.~\ref{sec:Experiment} over six large scale datasets (four of them gather one billion dataset points each) of very different nature shows a very close fit with the asymptotical model.
Section~\ref{sec:Discussion} extends results to a plurality scenarios where the query (or a dataset point) is modified so that the $\kx$-NN becomes the $\kt$-NN: if $\kt< \kx$, the perturbated point gets closer to the anchor, if $\kt>\kx$ it is moved farther away. %from this NN.
 
\section{Related Work}

\subsection{Experimental Evidence of the Vulnerability}
The forgery of query image deflecting nearest neighbor search has a long history in Content-Based Image Retrieval (CBIR) and image classification.

\citet{do2012enlarging} motivate their work by pinpointing that CBIR became a technology brick used in applications where security is at stake, like content monetization on social networks.
In the pre-era of deep learning, these authors delude such a mechanism (\ie\ a given dataset image  is no longer returned by the $k$-NN search)
by exploiting the Achilles' heel of the SIFT local description (orientation).
\citet{tolias2019targeted} play the opposite game with nowadays deep learning representation: the query image (\eg\ depicting a flower) is manipulated to retrieve images matching another visual content (\eg\ the Eiffel tower) to protect the privacy of the querying user. \citet{Xiao_2021_CVPR} investigate the same scenario but with black-box transferability.
The general poor level of security is due to two weak points: \textit{i}) deep neural networks provide discriminative but sensitive feature vectors easily corrupted by image manipulation, a well-known fact  in the literature of adversarial examples~\citep{DBLP:journals/corr/SabourCFF15}, and \textit{ii}) a small move in this high dimensional feature space may be sufficient to dramatically change the nearest neighbors as demonstrated by~\citet{9194069}.

Our paper is dedicated to this later vulnerability.

\subsection{Theoretical Evidence of the Vulnerability}
\citet{wang2018analyzing} extend the theoretical analysis of classification based on nearest neighbors.
The classic theory shows how the probability of correct classification behaves for fixed $k$ and $n$ increasing \citep[Chap.~5]{Devroye:1996wq} or the universal consistency when $k$ is allowed to grow with $n$ s.t. $\nicefrac{k}{n}\to0$ \citep[Chap.~6]{Devroye:1996wq}. 
\citet{wang2018analyzing} generalize this to the adversarial risk by introducing the concept of astuteness: not only a query $\q$ should be correctly classified, but so are all the points surrounding it at a distance lower than a robustness radius $r$. They outline the vulnerability of NN classifier in the following way: zero robustness is expected for a fixed $k$ and one has to set $k$ to a very large amount $\Omega(\sqrt{dn\log n})$ to asymptotically reach the robustness of the optimal Bayesian classifier.  

\citet{9194069} are the only authors we are aware of studying the core robustness of the NN search \emph{out of any supervision task}.
There is no label and thus no law $p(Y|X)$, but just a collection of $n$ points in $\mathbb{R}^d$.
They do not make assumption about the distribution of the points around the query but only about their distances.
They show that, asymptotically as the size of the dataset increases,
the necessary relative amount of perturbation has a simple expression.
Their Theorem~2 states that if the perturbation amount is bigger than this lower bound, then the attacker is sure to reach his goal.

Our work offers a much more accurate model of the amount of perturbation revealing its asymptotical and non-asymptotical distributions.
Indeed, our results contradict the Theorem~2 of~\citep{9194069}.

% !TEX root = alternate.tex

\section{Problem Statement}
Let us consider a dataset $\D$ of $n$ points $\{\x_i\}_{i=1}^n\in\real^d$, plus a query point $\q$ wherever in this space.
Suppose that an artificial intelligent program bases its decision on the dataset points neighboring the query.
Especially, the decision will take dataset point $\x$ into account if it is one of the $\kt$ nearest neighbors of $\q$.

Suppose that this is not the case for that query $\q$ and this dataset point $\x$.
In other words, $\x$ is only the $\kx$ nearest neighbor of $\q$ with $1\leq \kt < \kx \leq n$.
This work is interested in the amount of perturbation to be applied to dataset point $\x$ so that the program now takes it into account.

\subsection{Definition of the Relative Amount of Perturbation}
As shown in Fig.~\ref{fig:PbStat}, we denote by $\t$ the dataset point which is the $\kt$ nearest neighbor of $\q$.
We also introduce $\dt := \|\t - \q\|$ and $\dx := \|\x - \q\|$ (Euclidean norm) ; we have $\dt < \dx$.
The perturbation pushes $\x$ to a new point $\y$ with the goal that its neighbor rank is $\ky\leq \kt$, thus at a distance $\dy:=\|\y-\q\| \leq \dt$ away from $\q$.

The relative amount of perturbation is evaluated by the ratio $\delta := \|\x - \y\|/ \|\x - \q\|$, with $\delta\in(0,1)$.
It measures the norm of the perturbation $\y-\x$ relatively to the original distance $x$ of $\x$ from $\q$.
\citet{DBLP:journals/corr/SabourCFF15} also use this metric.
Obviously, this ratio is set to the minimum if $\x$ is pushed onto $\y$ in a direct line towards $\q$ so that $\y$ belongs to the line segment $\overline{\x\q}$ as shown in Fig.~\ref{fig:PbStat}.
This implies that:
\begin{equation}
\label{eq:Equivalence}
\ky \leq \kt \quad\Longleftrightarrow\quad \dy \leq \dt \quad\Longleftrightarrow \quad\delta \geq 1 - \frac{\dt}{\dx}.
\end{equation}
The sequel assumes that this strategy is feasible and we set $\delta = 1-\nicefrac{\dt}{\dx}$.

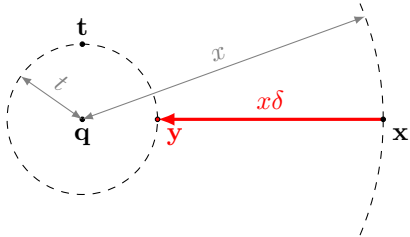
\begin{figure}[ht]
\begin{center}
\def\ra{4}
\begin{tikzpicture}
\draw [dashed] (0,0) circle (1) ;
\draw [>=latex,<-,color=red,very thick] (1,0) -- (\ra,0) node[midway,above,sloped] {$x\delta$} ;
\draw [fill=red] (1,0) circle (0.03) ;
\draw (1,0) node[below right,color=red] {$\y$};
\draw [fill] (0,0) circle (0.03) ;
%\draw (0,0) node {$\bullet$} ;
\draw (0,0) node[below]{$\q$};
\draw [fill] (0,1) circle (0.03) ;
\draw (0,1) node[above]{$\t$};
\draw [fill] (\ra,0) circle (0.03) ;
\draw (\ra,0) node[below right]{$\x$};
\draw [>=latex,<->,color=gray] (0,0) -- (145:1) node[midway,above,sloped] {$\dt$} ;
%\draw (140:0.5) node[above right] {$t$};
\draw [>=latex,<->,color=gray] (0,0) -- (20:\ra) node[midway,above,sloped] {$\dx$} ;
\draw [dashed] (\ra,0) arc (0:23:\ra);
\draw [dashed] (\ra,0) arc (0:-23:\ra);

%\draw (30:1.5) node[above] {$x$};
\end{tikzpicture}
\end{center}
\caption{Configuration of the dataset neighbors $\x$ and $\t$, and the manipulated point $\y$ with respect to the query $\q$.}
\label{fig:PbStat}
\end{figure}

\subsection{Statistical Model}
The relative amount of perturbation depends on the configuration of the dataset points locally around $\q$.
\citet{9194069} consider that this dataset of points is indeed random.
This means that distances $t$ and $x$ are occurrences of random variables $T_n$ and $X_n$ (note that $0<T_n<X_n$),
and so is the relative amount of perturbation $\delta$ w.r.t. random variable $\Delta_n:= 1 - T_n/X_n$. 
The subscript $n$ stresses that the size of the dataset is a major factor.
 
\citet{9194069} do not impose a specific distribution of the dataset points in $\real^d$, but only assume a statistical distribution of their distances from $\q$.
\begin{assumption}
\label{ass:1}
The distances of the dataset points w.r.t. the query $\q$ are random variables independent and identically distributed,
whose c.d.f. is denoted by $F:\real^+ \to [0,1]$ and p.d.f. $f:\real^+ \to \real^+$.
\end{assumption}
We should have written $F_\q$ and $f_\q$ to outline that this statistical model only holds for the distances from $\q$ ; we omit this subscript for the sake of simplicity. 

A second assumption deals with the behaviour of the c.d.f. $F$ around 0.
\begin{assumption}
\label{ass:2}
The distances of the dataset points w.r.t. the query $\q$ are \textbf{absolutely continuous} r. v., whose c.d.f. $F$ is a regularly varying function~\cite{Bingham:1987vz} around $0^+$ with index $\ell>0$ in the sense that, $\forall \lambda>0$
\begin{equation}
\label{eq:LID}
\lim_{x\to 0^+} \frac{F(\lambda x)}{F(x)} = \lambda^\ell.
%\lim_{x\to 0^+} \frac{F((1-\delta)x)}{F(x)} = \lim_{\xi\to 0^+} \frac{F((1-\delta)F^{-1}(\xi))}{\xi} = (1-\delta)^\ell.
\end{equation}
\end{assumption}
Again, we use a simplified notation although the value of the index $\ell$ of the regular variation is specific to the given $\q$ because it models the distribution of the distance from $\q$.
Indeed, \citet{9194069} call it the Local Intrinsic Dimension (LID) to insist that it only characterises the local neighborhood of the query point.
 
This assumption is motivated by the theory of extreme values~\citep[Chap.VIII]{Feller:1991ut}.
As $n$ increases while the ranks $\kx$ and $\kt$ are fixed, the distances of the $\kx$ and $\kt$ nearest neighbors tap into the lower tail of the distance distribution, which ought to be regularly varying because it is lower bounded by 0.
This implies that the \emph{absolute} amount of perturbation $X_n - T_n$ asymptotically goes to zero.
Yet, the asymptotical behavior of the \emph{relative} amount $\Delta_n = (X_n-T_n)/X_n$ is not trivial.
In the end, the neighborhood of $\q$ is parametrized with the single index $\ell$. 

As an example, if the neighbors are Gaussian distributed spanning an affine hyperplane of dimension $d^\prime\leq d$ and passing through query $\q$, then $\ell= d^\prime$.
The converse is of course not true. Indeed, index $\ell$ can be much smaller than the representational dimension $d$. 

% !TEX root = alternate.tex

\section{Statistics of the Relative Perturbation}
\label{sec:Stat}
\subsection{Finite dataset size $n$}
The joint probability distribution of $(T_n,X_n)$ has the following expression~\citep[Eq.~2.2.1]{H.-A.-David:2003ts}:\\
$\forall (t,x)\in\real^{+2}$
\begin{equation}
g_n(t,x) =
\begin{cases}
0 \quad\quad\quad\quad\quad\quad\text{if } t \geq x\geq 0 \\
\frac{F^{\kt-1}(t)[F(x) - F(t)]^{\kx-\kt - 1} [1 - F(x)]^{n-\kx}}{B(\kt,\kx-\kt)B(\kx,n-\kx+1)}\\
\quad \times f(t)f(x)\quad\text{if } x > t\geq 0,
\end{cases}
\end{equation}
%\begin{eqnarray}
%g_n(t,x) &=& \frac{F^{\kt-1}(t)[F(x) - F(t)]^{\kx-\kt - 1} [1 - F(x)]^{n-\kx}}{B(\kt,\kx-\kt)B(\kx,n-\kx+1)}\nonumber\\
%&\times& f(t)f(x),
%\end{eqnarray}
%\begin{equation}
%g_n(t,x) = \frac{F^{\kt-1}(t)[F(x) - F(t)]^{\kx-\kt - 1} [1 - F(x)]^{n-\kx}}{B(\kt,\kx-\kt)B(\kx,n-\kx+1)}f(t)f(x),
%%&\times& ,
%\end{equation}
where $B$ is the Beta function.

The c.d.f. $F_{\Delta_n}$ of random variable $\Delta_n$ is given by
\begin{eqnarray}
F_{\Delta_n}(\delta) &=& \Prob(\Delta_n \leq \delta) = \Prob(X_n(1-\delta) \leq T_n) \\
&=&\int_0^{+\infty} \left(\int_{(1-\delta)x}^{x} g_n(t,x) dt\right) dx.
\label{eq:IntJoint}
%&=&\frac{\int_0^{+\infty} H(\delta,x) f(x) [1 - F(x)]^{n-\kx} dx}{B(\kx,n-\kx+1)}
%&=&\frac{1}{B(\kx,n-\kx+1)} \times \int_0^{+\infty} H(\delta,x) f(x) [1 - F(x)]^{n-\kx} dx
%&&\int_0^{1} \int_{F((1-\delta)F^{-1}(\xi))}^{1}   \tau^{\kt-1}f(F^{-1}(\tau))[\xi - \tau]^{\kx-\kt - 1}  f(F^{-1}(\xi)) [1 - \xi]^{n-\kx} d\tau d\xi
\end{eqnarray}

Figure~\ref{fig:Integral} illustrates function $g_n$ and the domain of integration.
Appendix~\ref{app:A} shows the following results:
\begin{proposition}
\label{prop:CDFn}
The c.d.f. of the minimal relative amount of perturbation is given by:
\begin{equation}
F_{\Delta_n}(\delta) = 1 - \Exp \left[I_{\nicefrac{F((1-\delta)F^{-1}(\Xi))}{\Xi}}(\kt,\kx-\kt)\right].
\label{eq:CDFn}
%&=& \frac{1}{B(\kx,n-\kx+1)} \times \int_0^{1} \frac{H(\delta,F^{-1}(\xi))}{(1 - \xi)^{\kx}} e^{n\log(1 - \xi)} d\xi.
\end{equation}
with $\Xi \sim Beta(\kx,n-\kx+1)$
\end{proposition}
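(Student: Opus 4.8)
The plan is to compute the double integral \eqref{eq:IntJoint} by performing the inner $t$-integral in closed form, identifying a regularized incomplete Beta function, and then transforming the remaining $x$-integral by the probability integral transform.

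First I would fix $x>0$ and look at $\int_{(1-\delta)x}^{x} g_n(t,x)\,dt$. Since $\delta\in(0,1)$, the whole range $(1-\delta)x\le t\le x$ lies in the region $x>t\ge 0$ where the nonzero branch of $g_n$ applies. Because $F$ is absolutely continuous by Assumption~\ref{ass:2}, the substitution $u=F(t)$, $du=f(t)\,dt$, is legitimate and turns the inner integral into
\[
\frac{f(x)\,[1-F(x)]^{n-\kx}}{B(\kt,\kx-\kt)\,B(\kx,n-\kx+1)}\int_{F((1-\delta)x)}^{F(x)} u^{\kt-1}\,[F(x)-u]^{\kx-\kt-1}\,du .
\]
A second substitution $v=u/F(x)$ factors out $F(x)^{\kx-1}$ and produces $\int_{F((1-\delta)x)/F(x)}^{1} v^{\kt-1}(1-v)^{\kx-\kt-1}\,dv$, which equals $B(\kt,\kx-\kt)\bigl(1-I_{z}(\kt,\kx-\kt)\bigr)$ with $z:=F((1-\delta)x)/F(x)\in[0,1]$ (the bound holds because $F$ is nondecreasing and $(1-\delta)x\le x$).

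Collecting the surviving factors, the inner integral equals $f_{X_n}(x)\bigl(1-I_{z}(\kt,\kx-\kt)\bigr)$, where $f_{X_n}(x)=F(x)^{\kx-1}[1-F(x)]^{n-\kx}f(x)/B(\kx,n-\kx+1)$ is recognised as the p.d.f.\ of the $\kx$-th order statistic $X_n$. Plugging this back into \eqref{eq:IntJoint} gives $F_{\Delta_n}(\delta)=1-\Exp\bigl[I_{F((1-\delta)X_n)/F(X_n)}(\kt,\kx-\kt)\bigr]$. To reach the stated form I would finish with the probability integral transform: applying $F$ to the i.i.d.\ absolutely continuous distances yields i.i.d.\ uniforms on $[0,1]$, so $\Xi:=F(X_n)$ is distributed as the $\kx$-th order statistic of $n$ uniforms, i.e.\ $\Xi\sim Beta(\kx,n-\kx+1)$; writing $X_n=F^{-1}(\Xi)$ gives $F((1-\delta)X_n)=F\bigl((1-\delta)F^{-1}(\Xi)\bigr)$ and $F(X_n)=\Xi$, which is precisely \eqref{eq:CDFn}.

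The argument is essentially bookkeeping with order-statistic densities; the steps that require care are the substitution $u=F(t)$ — which is exactly what absolute continuity in Assumption~\ref{ass:2} buys us — and the final change of variable, where $F^{-1}$ must be read as the generalized inverse and one should check that the event $\{F(X_n)=0\}$, on which the ratio defining $z$ would be ill-posed, carries no mass (true since $f_{X_n}$ vanishes there because $\kx\ge 2$). I do not anticipate a genuine obstacle beyond this.
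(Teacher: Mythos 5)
Your proposal is correct and follows essentially the same route as the paper's Appendix~\ref{app:A}: the inner $t$-integral is reduced to a regularized incomplete Beta function via the substitution $u=F(t)$ (the paper does $\tau=F(t)/F(x)$ in one step), and the outer integral is turned into an expectation over $Beta(\kx,n-\kx+1)$ via $\xi=F(x)$, which is exactly your probability-integral-transform step $\Xi=F(X_n)$. Your added remarks on the generalized inverse and the null mass of $\{F(X_n)=0\}$ are sound but not needed beyond what the paper already assumes.
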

$I_x(\alpha,\beta)$ is the regularized incomplete beta function, \ie\ the c.d.f. of
the beta distribution $Beta(\alpha,\beta)$.

\begin{figure}[bh]
\begin{center}
\includegraphics[width=\rr\columnwidth]{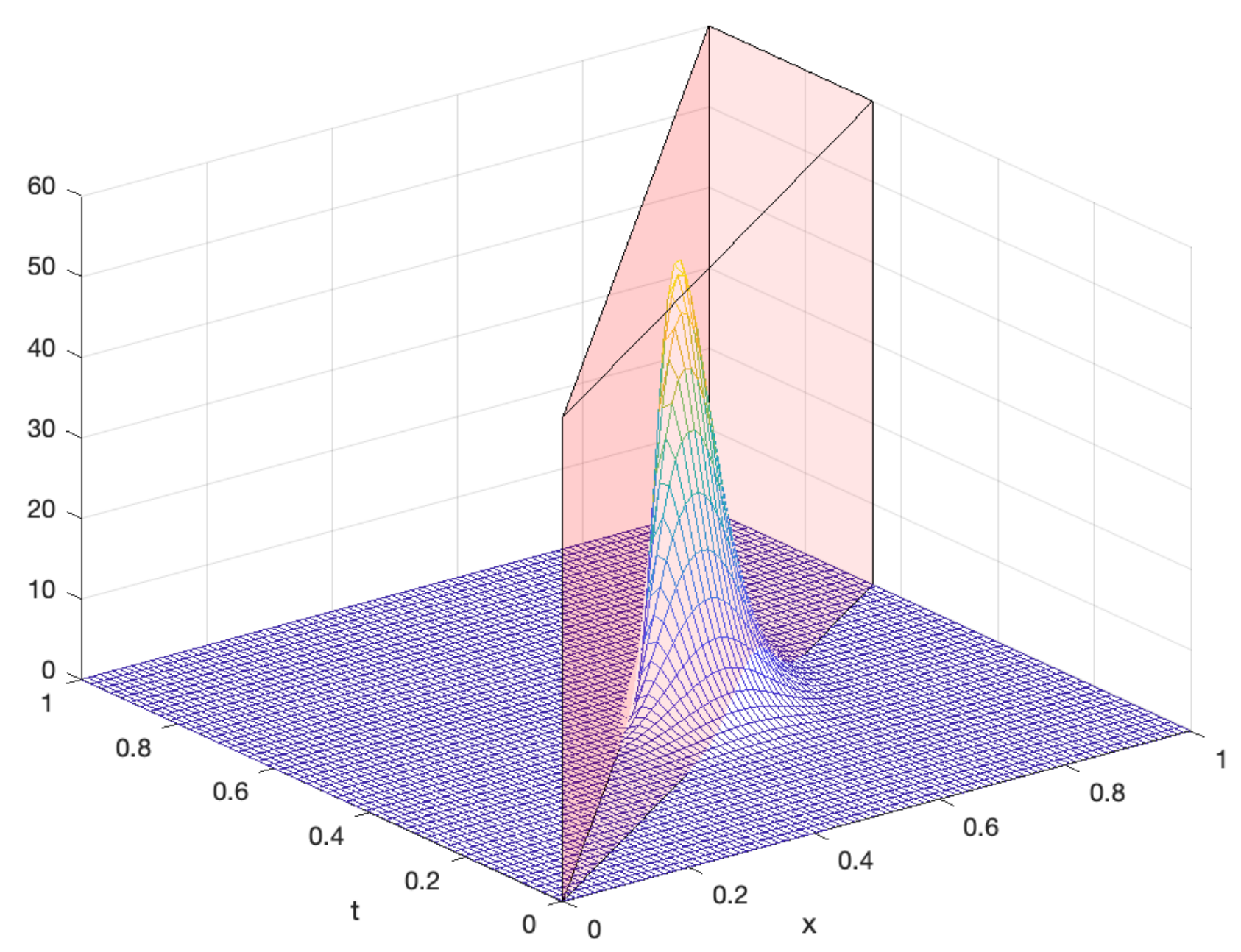}
\caption{Visualization of the part of $g_n(t,x)$ to be integrated in~\eqref{eq:IntJoint}, for $\delta=\nicefrac{1}{3}$, $\kt = 2$, $\kx=4$, $n=40$, and $F(x)=x^5$.}
\label{fig:Integral}
\end{center}
\end{figure}

\subsection{Infinite dataset size $n\to+\infty$}
We now look for a better understanding of the distribution of $\Delta_n$ asymptotically as $n\to+\infty$.
Figure~\ref{fig:Converge} shows that its c.d.f. $F_{\Delta_n}$ seems to converge to an asymptotic function.
Appendix~\ref{app:B} uses the Laplace's method to find the limit
\begin{equation}
%F_{\Delta_n}(\delta) \approx I_{1-(1-\delta)^\ell}(\kx-\kt,\kt) \frac{n!}{n^{\kx} (n-\kx)!} \overset{n\to+\infty}{\rightarrow} I_{1-(1-\delta)^\ell}(\kx-\kt,\kt).
F_{\Delta}(\delta):= \lim_{n\to+\infty} F_{\Delta_n}(\delta) = I_{L(\delta)}(\kx-\kt,\kt).
\label{eq:CDFasym}
\end{equation}
with $L(x):=1-(1-x)^\ell$.

This can be restated as the main result of this paper:
\begin{proposition}
\label{prop:CDFasy}
For two natural integers $\kx > \kt$, the relative amount of perturbation $\Delta_n$ converges in distribution to $\Delta = 1 - B^{\nicefrac{1}{\ell}}$ with $B\sim Beta(\kt,\kx-\kt)$ as the size $n$ of the dataset increases.
\end{proposition}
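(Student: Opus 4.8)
The plan is to obtain the limiting c.d.f.~\eqref{eq:CDFasym} directly from the closed form~\eqref{eq:CDFn} of Proposition~\ref{prop:CDFn}, by pushing the limit $n\to\infty$ through the expectation, and then to identify that c.d.f. with the law of $1-B^{\nicefrac{1}{\ell}}$. First I would record the elementary fact that the random variable $\Xi\sim Beta(\kx,n-\kx+1)$ appearing in~\eqref{eq:CDFn} satisfies $\Exp[\Xi]=\kx/(n+1)\to 0$ and $\Var[\Xi]\to 0$, hence $\Xi\to 0$ in probability as $n\to\infty$ (one may even place all the $\Xi$'s on a single probability space so that this convergence is almost sure). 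Since, by Assumption~\ref{ass:2}, $F$ is absolutely continuous, vanishes at $0$ and is regularly varying with index $\ell>0$ — so that $F(x)$ equals $x^\ell$ times a slowly varying factor and in particular $F(x)>0$ for small $x>0$ — its generalized inverse $F^{-1}$ is well defined near $0$ and $F^{-1}(\Xi)\to 0$ in probability.

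Next, fixing $\delta\in(0,1)$, I would rewrite the argument of the regularized incomplete beta function in~\eqref{eq:CDFn} as $F\big((1-\delta)F^{-1}(\Xi)\big)\big/F\big(F^{-1}(\Xi)\big)$ and apply the regular-variation relation~\eqref{eq:LID} with $\lambda=1-\delta$ along the null sequence $F^{-1}(\Xi)$: this shows the argument converges in probability to $(1-\delta)^\ell$. Continuity of $u\mapsto I_u(\kt,\kx-\kt)$ on $[0,1]$ together with the trivial bound $I_u\le 1$ then let dominated convergence give $F_\Delta(\delta)=1-I_{(1-\delta)^\ell}(\kt,\kx-\kt)$. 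The reflection identity $I_u(\alpha,\beta)=1-I_{1-u}(\beta,\alpha)$ combined with $L(\delta)=1-(1-\delta)^\ell$ turns this into $I_{L(\delta)}(\kx-\kt,\kt)$, which is exactly~\eqref{eq:CDFasym}. Finally I would check that $\Prob(1-B^{\nicefrac{1}{\ell}}\le\delta)=\Prob(B\ge(1-\delta)^\ell)=1-I_{(1-\delta)^\ell}(\kt,\kx-\kt)$ for $B\sim Beta(\kt,\kx-\kt)$, so that $1-B^{\nicefrac{1}{\ell}}$ has c.d.f. $F_\Delta$, establishing the convergence in distribution.

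The main obstacle is making the interchange of limit and expectation fully rigorous, and in particular invoking~\eqref{eq:LID} along a \emph{random} null sequence: the defining limit of regular variation is stated pointwise in the scaling factor $\lambda$, but the uniform convergence theorem for regularly varying functions makes it locally uniform in $\lambda$, so that $F^{-1}(\Xi)\to 0$ in probability does transfer to convergence in probability of the ratio, after which the boundedness of $I_u$ renders dominated convergence automatic. A secondary point to nail down is the behaviour of $F^{-1}$ at the origin (continuity there, with $F^{-1}(0^+)=0$), which follows from $F$ being absolutely continuous and regularly varying with positive index. As an alternative one could bypass Proposition~\ref{prop:CDFn} and attack the double integral~\eqref{eq:IntJoint} asymptotically by Laplace's method (the route taken in Appendix~\ref{app:B}), but going through~\eqref{eq:CDFn} is considerably shorter.
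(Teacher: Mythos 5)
Your proof is correct, and it reaches \eqref{eq:CDFasym} by a genuinely different route than the paper. The paper (Appendix~\ref{app:B}) treats \eqref{eq:PrevLaplace} as an integral $\int_0^1 g(\xi)e^{nh(\xi)}d\xi$ and invokes a Laplace-method lemma, which requires expanding $g$ near $0^+$, normalizing by $B(\kx,n-\kx+1)$, and finishing with $\Gamma(n+\alpha)\approx\Gamma(n)n^{\alpha}$; you instead read \eqref{eq:CDFn} probabilistically: $\Xi\sim Beta(\kx,n-\kx+1)$ has mean $\kx/(n+1)$ and vanishing variance, hence $\Xi\to 0$ in probability, $F^{-1}(\Xi)\to 0$, the regular-variation hypothesis \eqref{eq:LID} with the \emph{fixed} scale $\lambda=1-\delta$ gives $F((1-\delta)F^{-1}(\Xi))/\Xi\to(1-\delta)^{\ell}$ in probability, and the bound $0\le I_u(\kt,\kx-\kt)\le 1$ makes the passage to the limit inside the expectation immediate. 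Your route is more elementary and arguably tighter on rigor: the hypotheses of Lemma~\ref{lemma:Laplace} (integrability, the monotonicity-type condition on $h$, and the expansion $g(\xi)\approx A\xi^{\alpha}$ — whose constant $A$ is obtained from the very same regular-variation limit) are only implicitly checked in the appendix, whereas convergence in probability to a constant plus uniform boundedness is airtight. What the Laplace route buys in exchange is quantitative information: the prefactor $n!/(n^{\kx}(n-\kx)!)$ exhibits the (slow) rate of convergence that the paper later discusses experimentally, which your dominated-convergence argument cannot provide. Two small simplifications to your own write-up: since $\lambda=1-\delta$ is fixed you do not need the uniform convergence theorem for regularly varying functions — the pointwise limit \eqref{eq:LID} composed with a (random) null sequence suffices by the usual $\epsilon$--$\eta$ argument — and you need not couple the $\Xi$'s on one probability space, since both the hypothesis (convergence in probability to a constant) and the conclusion (convergence of expectations) depend only on the laws. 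Your closing identification of $I_{L(\delta)}(\kx-\kt,\kt)$ with the c.d.f. of $1-B^{\nicefrac{1}{\ell}}$ via the reflection identity is exactly the paper's own final step, and pointwise convergence at every point of the continuous limit c.d.f. indeed yields convergence in distribution.
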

\begin{proof}
Since function $L$ is a strictly increasing function, $F_\Delta(\delta)$ can be expressed as
the c.d.f. of the random variable $L^{-1}(B^\prime) = 1-(1-B^\prime)^{\nicefrac{1}{\ell}}$ with $B^\prime\sim B(\kx-\kt,\kt)$:
\begin{equation}
\Prob(L^{-1}(B')\leq\delta) = \Prob(B'\leq L(\delta)) = F_\Delta(\delta).
\end{equation}
 Knowing that $B = 1-B^\prime$ is distributed as $B(\kt,\kx-\kt)$, we have $\Delta = 1 - (1-B^\prime)^{\nicefrac{1}{\ell}} = 1 - B^{\nicefrac{1}{\ell}}$.
\end{proof}

\begin{figure}[hb]
\begin{center}
\includegraphics[width=\rr\columnwidth]{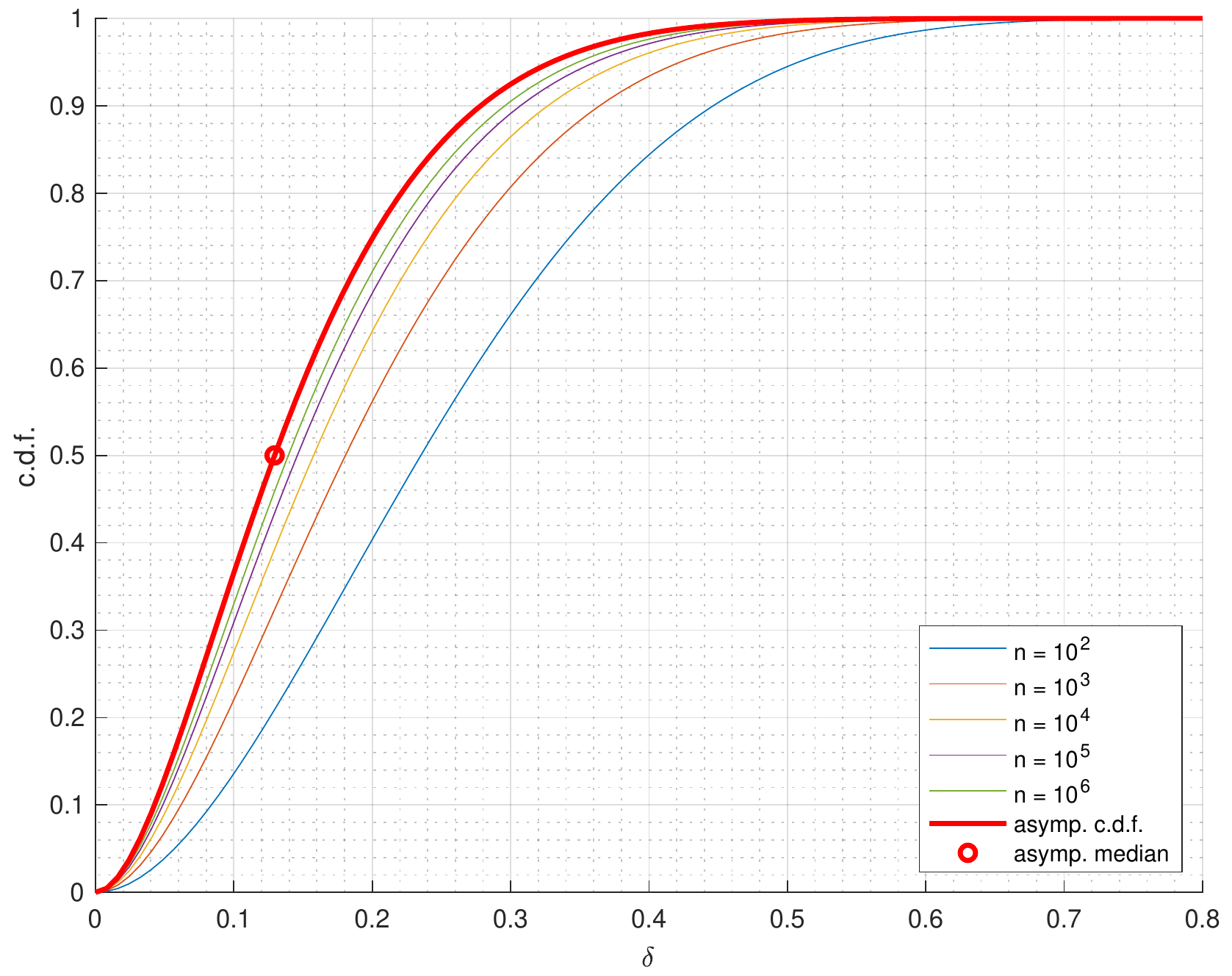}
\caption{Convergence of $F_{\Delta_n}$ to $F_\Delta$ as $n$ increases from $10^2$ to $10^6$, $\kt = 2$, $\kx=4$, and $F(x)=x^\ell$ with $\ell = 5$.}
\label{fig:Converge}
\end{center}
\end{figure}

\subsection{Statistics of Random Variable $\Delta$}

This section derives the mean, the median, and the mode associated to the asymptotical distribution of $\Delta$.
\begin{proposition}
\label{prop:StatDelta}
The r. v. $\Delta$ has the following statistics:
\begin{itemize}
\item Expectation: 
\begin{equation}
\Exp[\Delta] \approx 1 - (\nicefrac{\kt}{\kx})^{\nicefrac{1}{\ell}}\left(1 - \frac{\ell-1}{2\ell^2} \frac{\kx-\kt}{\kt(\kx+1)}\right)
\label{eq:AppExp}
\end{equation}
for $\kx\gg 1$.
\item Median:
\begin{eqnarray}
\bar{\Delta} &=&1 - (I^{-1}_{\nicefrac{1}{2}}(\kt,\kx-\kt))^{\nicefrac{1}{\ell}}\nonumber\\
&\approx& 1 - (\nicefrac{\kt}{\kx})^{\nicefrac{1}{\ell}}\left(1 + \frac{2\kt - \kx}{\kt(3\kx-2)}\right)^{\nicefrac{1}{\ell}}
\label{eq:AppMed}
\end{eqnarray}
for $\kt \geq 2$ and $\kx \geq \kt + 2$.
\item Mode:
\begin{equation}
\mathsf{mode} =1 - \left(\frac{\kt}{\kx}\right)^{\nicefrac{1}{\ell}}\left(1 + \frac{(\ell+1)\kt-\kx}{\ell\kx\kt -(\ell+1)\kt}\right)^{\nicefrac{1}{\ell}}
\label{eq:AppMode}
\end{equation}
\end{itemize}
\end{proposition}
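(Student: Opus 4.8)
The plan is to treat the three statistics separately, in each case reducing the question to a classical fact about the Beta law through the representation $\Delta = 1 - B^{\nicefrac{1}{\ell}}$, $B \sim Beta(\kt,\kx-\kt)$, of Proposition~\ref{prop:CDFasy}. As a preliminary I would record the two elementary consequences of that representation that the arguments use: the c.d.f. $F_\Delta(\delta)=\Prob(\Delta\leq\delta)=1-I_{(1-\delta)^\ell}(\kt,\kx-\kt)$ (equivalent to~\eqref{eq:CDFasym}), and, by the change of variable $\delta = 1-b^{\nicefrac{1}{\ell}}$ in the Beta density, the density $f_\Delta(\delta) \propto (1-\delta)^{\ell\kt-1}\bigl(1-(1-\delta)^\ell\bigr)^{\kx-\kt-1}$ on $(0,1)$.

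\textbf{Expectation.} By linearity $\Exp[\Delta] = 1 - \Exp[B^{\nicefrac{1}{\ell}}]$, and the fractional moments of a Beta variable are exact:
\[
\Exp[B^{\nicefrac{1}{\ell}}] = \frac{B(\kt+\nicefrac{1}{\ell},\,\kx-\kt)}{B(\kt,\,\kx-\kt)} = \frac{\Gamma(\kt+\nicefrac{1}{\ell})\,\Gamma(\kx)}{\Gamma(\kt)\,\Gamma(\kx+\nicefrac{1}{\ell})}.
\]
To pass from this closed form to~\eqref{eq:AppExp} I would apply the standard asymptotic expansion of a ratio of Gamma functions, $\Gamma(z+a)/\Gamma(z+b) = z^{a-b}\bigl(1 + (a-b)(a+b-1)/(2z) + O(z^{-2})\bigr)$, to the large-argument factor $\Gamma(\kx)/\Gamma(\kx+\nicefrac{1}{\ell}) \sim \kx^{-\nicefrac{1}{\ell}}(1+\cdots)$ (and, to get the correction term, also to $\Gamma(\kt+\nicefrac{1}{\ell})/\Gamma(\kt)$), keep only the first-order term, and regroup everything as a single rational factor multiplying $(\kt/\kx)^{\nicefrac{1}{\ell}}$. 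This is the delicate step: which factors one expands, to what order, and how the $O(1/\kx)$ remainder is collected is exactly what fixes the precise shape of the parenthesised factor in~\eqref{eq:AppExp}, and it is the only place an approximation (valid for $\kx\gg1$) enters.

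\textbf{Median.} Since $L(\delta)=1-(1-\delta)^\ell$ is strictly increasing, $F_\Delta$ in~\eqref{eq:CDFasym} is a bijection, and solving $F_\Delta(\bar\Delta)=\nicefrac{1}{2}$ together with the symmetry $I^{-1}_{\nicefrac{1}{2}}(\beta,\alpha)=1-I^{-1}_{\nicefrac{1}{2}}(\alpha,\beta)$ yields the \emph{exact} value $\bar\Delta = 1-(I^{-1}_{\nicefrac{1}{2}}(\kt,\kx-\kt))^{\nicefrac{1}{\ell}}$. The approximation then comes purely from substituting the classical estimate of the median of a Beta law, $I^{-1}_{\nicefrac{1}{2}}(\alpha,\beta)\approx(\alpha-\nicefrac{1}{3})/(\alpha+\beta-\nicefrac{2}{3})$, which is accurate when both shape parameters exceed one, i.e. under the stated $\kt\geq2$ and $\kx\geq\kt+2$. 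Writing $(3\kt-1)/(3\kx-2) = (\kt/\kx)\cdot\kx(3\kt-1)/\bigl(\kt(3\kx-2)\bigr)$ and simplifying the second factor to $1+(2\kt-\kx)/\bigl(\kt(3\kx-2)\bigr)$ produces~\eqref{eq:AppMed}; I expect this algebra to be routine.

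\textbf{Mode.} Here the value is exact rather than asymptotic. Taking $\log f_\Delta$ from the preliminary step, differentiating, and clearing denominators gives the stationarity condition $(\kx-\kt-1)\,\ell(1-\delta)^\ell = (\ell\kt-1)\bigl(1-(1-\delta)^\ell\bigr)$; the substitution $v:=(1-\delta)^\ell$ makes this \emph{linear} in $v$, with unique root $v = (\ell\kt-1)/(\ell\kx-\ell-1)$ in $(0,1)$, and a sign check on the derivative of $\log f_\Delta$ confirms this root is the maximiser. Back-substituting $\delta = 1-v^{\nicefrac{1}{\ell}}$ and rewriting $v = (\kt/\kx)\cdot\kx(\ell\kt-1)/\bigl(\kt(\ell\kx-\ell-1)\bigr)$ gives~\eqref{eq:AppMode}. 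In summary, the median and mode are essentially bookkeeping once the Beta representation is in hand, and the one genuine obstacle is the controlled Gamma-ratio expansion behind the expectation formula.
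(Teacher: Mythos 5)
Your proposal is correct, and the median and mode parts follow the paper's own route almost verbatim (exact quantile via the inverse regularized incomplete beta function plus the classical $(\alpha-\nicefrac{1}{3})/(\alpha+\beta-\nicefrac{2}{3})$ approximation for the median; stationarity of $\log f_\Delta$ with the substitution $v=(1-\delta)^\ell$ for the mode — the paper differentiates $f_\Delta$ directly, which is the same computation). The genuine difference is the expectation. The paper states it found no closed form and instead applies a second-order Taylor expansion (delta method) of $b\mapsto 1-b^{\nicefrac{1}{\ell}}$ around $\Exp[B]=\nicefrac{\kt}{\kx}$, using $\Var[B]=\Exp[B](1-\Exp[B])/(\kx+1)$; this is where the $(\kx+1)$ in the denominator of \eqref{eq:AppExp} comes from. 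You instead use the exact fractional moment $\Exp[B^{\nicefrac{1}{\ell}}]=\Gamma(\kt+\nicefrac{1}{\ell})\Gamma(\kx)/\bigl(\Gamma(\kt)\Gamma(\kx+\nicefrac{1}{\ell})\bigr)$ — which is in fact a closed form the paper claims not to have — and then expand both Gamma ratios. Carried out, your expansion yields $1-(\nicefrac{\kt}{\kx})^{\nicefrac{1}{\ell}}\bigl(1-\frac{\ell-1}{2\ell^2}\frac{\kx-\kt}{\kt\kx}\bigr)$, i.e. $\kx$ in place of the paper's $\kx+1$; the discrepancy is of higher order than the approximation itself, so both are equally valid instances of \eqref{eq:AppExp}. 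One caveat applies equally to both routes and is worth making explicit: expanding $\Gamma(\kt+\nicefrac{1}{\ell})/\Gamma(\kt)$ as $\kt^{\nicefrac{1}{\ell}}(1-\frac{\ell-1}{2\ell^2\kt})$ implicitly treats $\kt$ as large, just as the paper's delta method requires $B$ to concentrate relative to its mean (its coefficient of variation tends to $\nicefrac{1}{\sqrt{\kt}}$, not to $0$, as $\kx\to\infty$ with $\kt$ fixed); the stated hypothesis $\kx\gg1$ alone does not justify the correction term for small $\kt$, which is presumably why the paper supplies exact formulas for $\kt=1$ separately. Your route has the advantage of isolating this issue cleanly, since the $\kx$-factor expansion is rigorously controlled and only the $\kt$-factor is heuristic.
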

\begin{proof}
See Appendix~\ref{app:C}, which also contains exact formulas for the special case $\kt = 1$.
\end{proof}  

These statistics are expressed in order to outline the quantity $1 - (\nicefrac{\kt}{\kx})^{\nicefrac{1}{\ell}}$, which is indeed the main result of article~\citep[Th.~2]{9194069}.
Quoting this article, it outlines that
``\textit{the amount of perturbation required to subvert neighborhood rankings diminishes with the local intrinsic dimensionality}'' $\ell$ of this neighborhood.
%the perturbation of the neighborhood the query statistically needs a small is a decreasing function of $\ell$.
For instance, for large $\ell$, this term further simplifies into $\approx \log(\nicefrac{\kx}{\kt})/\ell$, which shows that $\ell$ is of utmost importance.
It has a bigger impact than the ratio $\nicefrac{\kx}{\kt}>1$ due to the log scale.

Figure~\ref{fig:PdfLid} shows how the mode of the p.d.f. of $\Delta$ decreases as $\ell$ increases, especially for large $\kx$.
It also illustrates that it becomes more peaky around its mode . 

However, \citet{9194069} do not state what this quantity is w.r.t. $\Delta$.
Their Theorem~2 states that if the relative amount of perturbation $\delta$ is strictly bigger than $1 - (\nicefrac{\kt}{\kx})^{\nicefrac{1}{\ell}} + \epsilon$,
then $\x$ has been modified into a point $\y$ whose rank $\ky < \kt$.
Yet, that $\epsilon$ is fuzzily defined in~\citep[Th.~2]{9194069}: ``\textit{For any sufficiently small real value $\epsilon>0$}''.
Our result contradicts this statement: That amount of perturbation achieves its goal only up to a probability which heavily depends on $\epsilon$:
\begin{equation}
\Prob(success) = F_\Delta\left(1 - (\nicefrac{\kt}{\kx})^{\nicefrac{1}{\ell}} + \epsilon\right).
\end{equation}

%Moreover, expectation and mode tend to $1 - (\nicefrac{\kt}{\kx})^{\nicefrac{1}{\ell}}$ only if $\ell$ and/or $\kx$ are large.

\begin{figure}[tbh]
\begin{center}
\includegraphics[width=\rr\columnwidth]{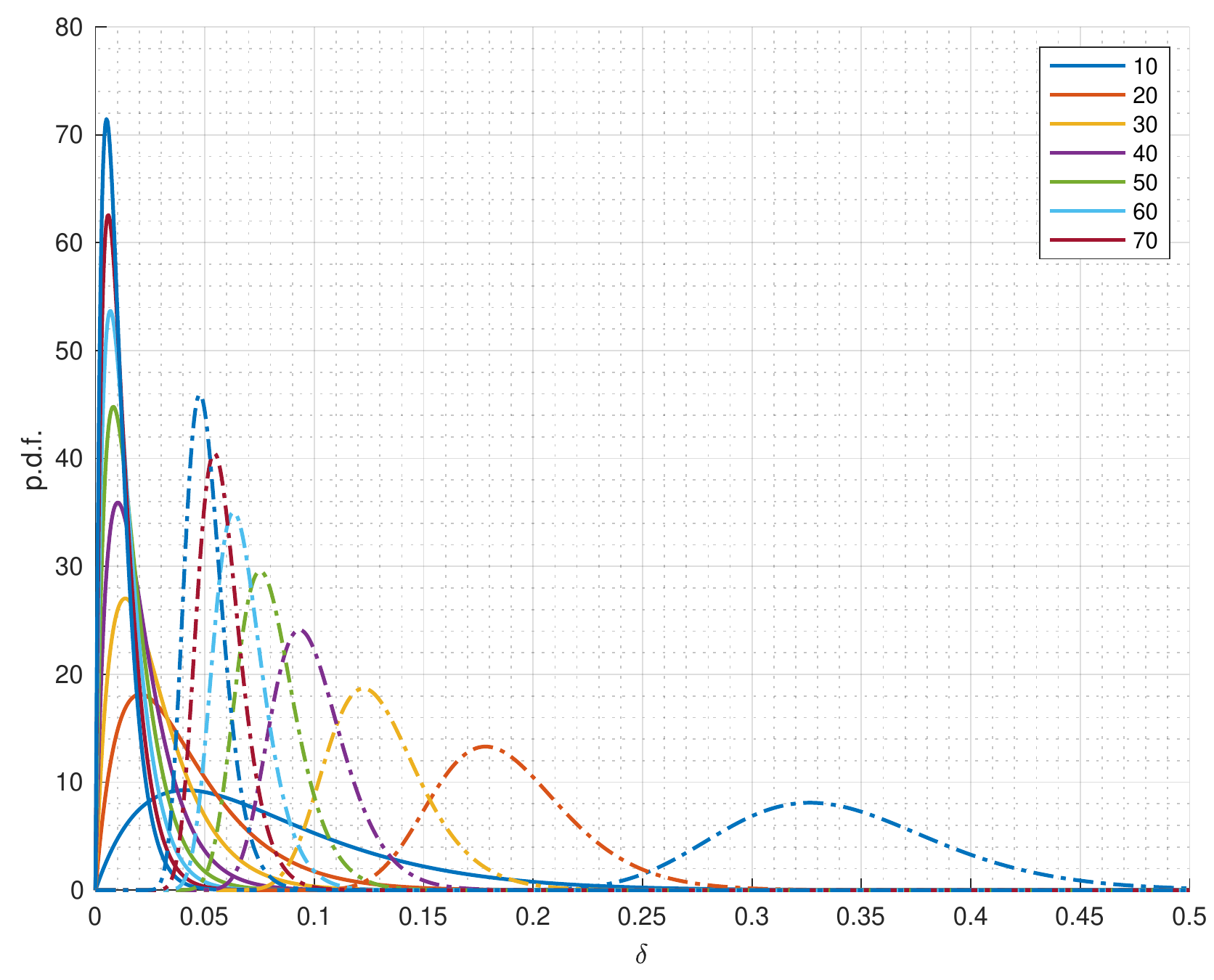}
\caption{Asymptotic probability density function of $\Delta$ as a function of $\delta\in[0,0.5]$ for values of $\ell\in\{10,20, \ldots, 70\}$ for $\kx=4$ (or $\kx = 100$, dashed) and $\kt=2$.}
\label{fig:PdfLid}
\end{center}
\end{figure}

% !TEX root = alternate.tex

\section{Experimental Work}
\label{sec:Experiment}

\subsection{Experimental Protocol}
\def \bigann {\texttt{BigANN}}
\def \imagenet {\texttt{ImageNet}}
\def \cifar {\texttt{CIFAR}}
\def \turing {\texttt{MSTuring}}
\def \spacev {\texttt{MSSPACEV}}
\def \deep {\texttt{DEEP}}
\def \nq{q}

The experimental protocol uses six datasets: \imagenet, \cifar, \bigann, \deep, \spacev, \turing.

The first four datasets are related to computer vision.
They correspond to global feature vectors extracted from images thanks to a deep neural network (ResNet50 for \imagenet\ and \cifar, GoogLeNet for \deep~\citep{Yandex-Deep}) or to 
local hand-crafted descriptors (SIFT for \bigann~\citep{bigann}).
\turing\ and \spacev~\citep{spacev} are related to web search. They are both released by Microsoft Research and consist of vector encoding of documents and/or queries better capturing the intent of the search.

\imagenet, \cifar, and \bigann\ are used in the article~\citep{9194069}, whereas \deep, \spacev, and \turing\ are used in the NeurIPS'21 Approximate Nearest Neighbor Search Challenge\footnote{\url{https://big-ann-benchmarks.com}}. 
Table~\ref{tab:datasets} summarizes these data ; Fig.~\ref{fig:PDF_LID} shows the empirical p.d.f. of the indices $\ell$ measured according to the following protocol. 

For each query, an exhaustive search finds the $\kx$-nearest neighbors in the dataset of size $n$, and their distances $\{d_i\}_{i=1}^{\kx}$ to the query.
The minimum relative amount of perturbation is computed by  $\delta = 1-\nicefrac{t}{x}$ where $t=d_{\kt}$ and $x=d_{\kx}$ are the distances of the $\kt$-th and $\kx$-th nearest neighbors.
For each query, the index $\ell$, so-called Local Intrinsic Dimension, is estimated using the maximum likelihood estimator, a.k.a. the Hill estimator~\citep{10.1214/aos/1176343247}:
\begin{equation}
\label{eq:Hill}
\hat{\ell} = -\left(\kx^{-1}\sum_{i=1}^{\kx} \log(\nicefrac{d_i}{d_{\kx}})\right)^{-1}
\end{equation}

The protocol provides $\nq$ pairs of measurements $\{(\hat{\ell}_i, \delta_i)\}_{i=1}^\nq$ (one per query).
We consider the neighborhoods with similar estimation values $\hat{\ell}_i$
as independent realizations of our statistical model described by assumptions~\ref{ass:1} and~\ref{ass:2}.

Figure~\ref{fig:PDF_LID} shows that the datasets have different range of indices.
The index $\ell$ takes values often bigger than 30 in \cifar\ and \imagenet, whereas \deep, \spacev\ and \turing\ are peculiar
with index values often below 5. Note that these ranges are weakly related to the space dimension (see Tab.~\ref{tab:datasets}).

\begin{table}[t]
\caption{Details about the datasets}
\begin{center}
\begin{tabular}{lcccc}
\toprule
dataset & $d$ & type & $(n,q)$ &  $\kx$ \\
\midrule
\imagenet & 2048 & float & (1,2M,10K) & 1K \\
\cifar & 2048 & float & (50K,10K) & 1K  \\
\bigann & 128 & uint8 & (1B,10K) & 100  \\
\deep & 96 & float & (1B,10K) & 100  \\
\turing & 100 & float & (1B,100K) & 100 \\
\spacev & 100 & uint8 & (1B,29K) & 100 \\
\bottomrule
\end{tabular}
\end{center}
\label{tab:datasets}
\end{table}%

\begin{figure}[t]
\begin{center}
\includegraphics[width = \rr\columnwidth]{./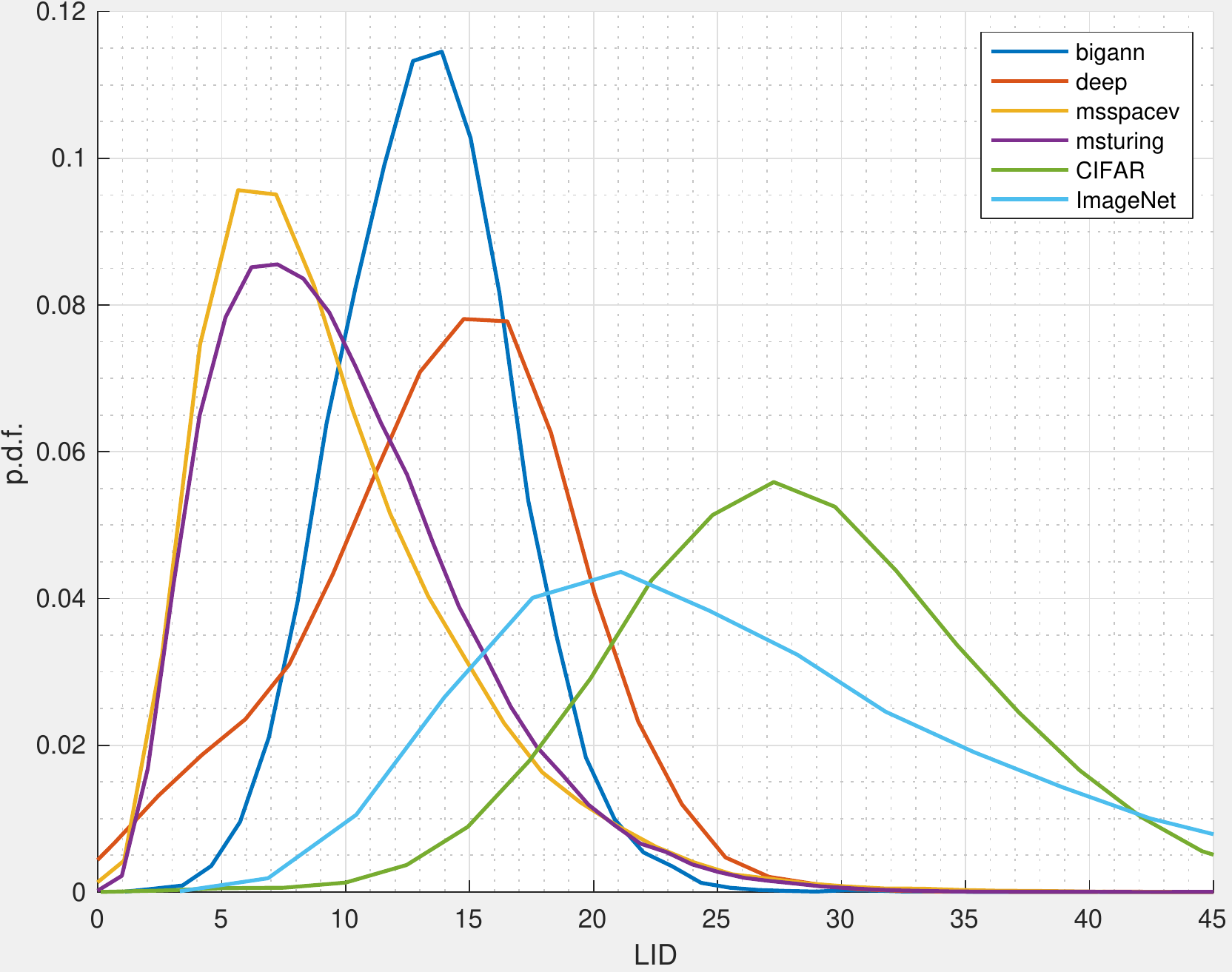}
\caption{Empirical p.d.f. of Hill estimations $\hat{\ell}$~\eqref{eq:Hill} with the kernel density estimator of~\citet{1337238}.}
\label{fig:PDF_LID}
\end{center}
\end{figure}

\subsection{Statistical Analysis}
The kernel density estimator of~\citet{10.1214/10-AOS799} learns an empirical joint p.d.f. $\pi(\ell,\delta)$ defined over $\real^+\times [0,1]$ from the measurements.
In the same way, we learn an empirical marginal p.d.f. $\hat{f}_L(\ell)$ from the estimations $\{\hat{\ell}_i\}_{i=1}^\nq$ (3rd cardinal spline KDE with Silverman rule~\citep{1337238}). 
Our goal is to compare the empirical joint p.d.f. $\pi(\ell,\delta)$ to the one based on our model, $f_\Delta(\delta|\ell)\hat{f}_L(\ell)$, coming from~\eqref{eq:PDFasym}.
Figure~\ref{fig:ImageNetPDF} shows the very good match between the empirical and joint models for ImageNet.
The same observation holds for the other datasets (see figures in Appendix~\ref{app:ExpRes}).

For further investigation, we would like to compare the empirical marginal and the asymptotical p.d.f. of $\Delta$.
To gather a maximum of samples, we normalize  the LID index $\ell$ with the following transformation:
\begin{proposition}
\label{prop:Normalized}
Suppose that $\Delta$ follows the asymptotical distribution given in Prop.~\ref{prop:CDFasy} with index $\ell$ and the ranks $(\kt,\kx)$,
then $\tilde{\Delta} = 1 - (1 - \Delta)^{\nicefrac{\ell}{\ell_0}}$ also follows the asymptotical distribution with the same ranks, but index $\ell_0$.
\end{proposition}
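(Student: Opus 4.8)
The plan is to reduce everything to the explicit power-law representation of $\Delta$ established in Proposition~\ref{prop:CDFasy}. By that proposition, if $\Delta$ follows the asymptotical distribution with index $\ell$ and ranks $(\kt,\kx)$, then we may write $\Delta = 1 - B^{\nicefrac{1}{\ell}}$ with $B\sim Beta(\kt,\kx-\kt)$; equivalently $1-\Delta = B^{\nicefrac{1}{\ell}}$. The only structural fact I need is that $x\mapsto x^{\ell/\ell_0}$ is a strictly increasing bijection of $[0,1]$ onto itself, which holds since $\ell,\ell_0>0$.

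First I would substitute this representation into the definition of $\tilde\Delta$:
\begin{equation}
\tilde\Delta = 1 - (1-\Delta)^{\nicefrac{\ell}{\ell_0}} = 1 - \left(B^{\nicefrac{1}{\ell}}\right)^{\nicefrac{\ell}{\ell_0}} = 1 - B^{\nicefrac{1}{\ell_0}},
\end{equation}
using the composition of the two power maps. Since $B\sim Beta(\kt,\kx-\kt)$ is unchanged, the right-hand side is, by Proposition~\ref{prop:CDFasy} read in the other direction (now with index $\ell_0$ in place of $\ell$), exactly the asymptotical distribution associated with ranks $(\kt,\kx)$ and index $\ell_0$. This proves the claim.

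As an alternative, more self-contained route, I would verify the statement directly at the level of c.d.f.s via~\eqref{eq:CDFasym}: for $\delta\in(0,1)$,
\begin{align}
\Prob(\tilde\Delta\leq\delta) &= \Prob\!\left(1-\Delta \geq (1-\delta)^{\nicefrac{\ell_0}{\ell}}\right) = \Prob\!\left(\Delta \leq 1-(1-\delta)^{\nicefrac{\ell_0}{\ell}}\right)\nonumber\\
&= I_{L\left(1-(1-\delta)^{\nicefrac{\ell_0}{\ell}}\right)}(\kx-\kt,\kt) = I_{1-(1-\delta)^{\ell_0}}(\kx-\kt,\kt),
\end{align}
where the second equality uses the strict monotonicity noted above and the last uses $L(x)=1-(1-x)^\ell$. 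The final expression is precisely $F_\Delta(\delta)$ with index $\ell_0$, which again gives the result. There is no genuine obstacle here: the whole content is the change of variable, and the only point requiring a word of care is justifying the monotone inversion of the event $\{\tilde\Delta\le\delta\}$, which is immediate from $\ell,\ell_0>0$; everything else is routine substitution.
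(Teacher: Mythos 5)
Your first argument is exactly the paper's proof: it writes $B=(1-\Delta)^\ell\sim Beta(\kt,\kx-\kt)$ and observes $\tilde{\Delta}=1-B^{\nicefrac{1}{\ell_0}}$, so the proposal is correct and takes essentially the same approach. The alternative c.d.f. computation is also correct but adds nothing beyond a routine re-derivation of the same change of variable.
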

\begin{proof}
Since $B = (1 - \Delta)^\ell \sim Beta(\kt,\kx-\kt)$, then $\tilde{\Delta} = 1 - (1 - \Delta)^{\nicefrac{\ell}{\ell_0}} = 1 - B^{\nicefrac{1}{\ell_0}}$.
\end{proof}

We apply this trick to create the samples $\{\tilde{\delta}_i\}_{i=1}^{\nq}$ with one common unique index value $\ell_0 = 10$.
It is as if all the measurement were coming from independent neighborhoods realizing the statistical model with index $\ell_0$.
The empirical p.d.f. of $\tilde{\Delta}$ is learned with the method of~\citet{1337238}.
Figure~\ref{fig:PlotTrick} compares this p.d.f. with the asymptotical model $f_\Delta(\cdot|\ell_0)$ in~\eqref{eq:PDFasym} from different pairs $(\kt,\kx)$.
One can see the very close fitting to the theoretical model although the datasets are of different nature, size, and setup.

Such a good fitting is surprising because we only made two assumptions about the data, and moreover, they are violated by some datasets.
For instance, \spacev\ and \bigann\ contains quantized vectors belonging to $\{0,1,\ldots,255\}^{d}$.
Therefore, the distances between points take discrete values and this is a clear violation of assumption~\ref{ass:2}.
The model is indeed robust enough to tackle a fine enough granularity of the distance values.

\begin{figure}[bt]
\begin{center}
\includegraphics[width = \rr\columnwidth]{./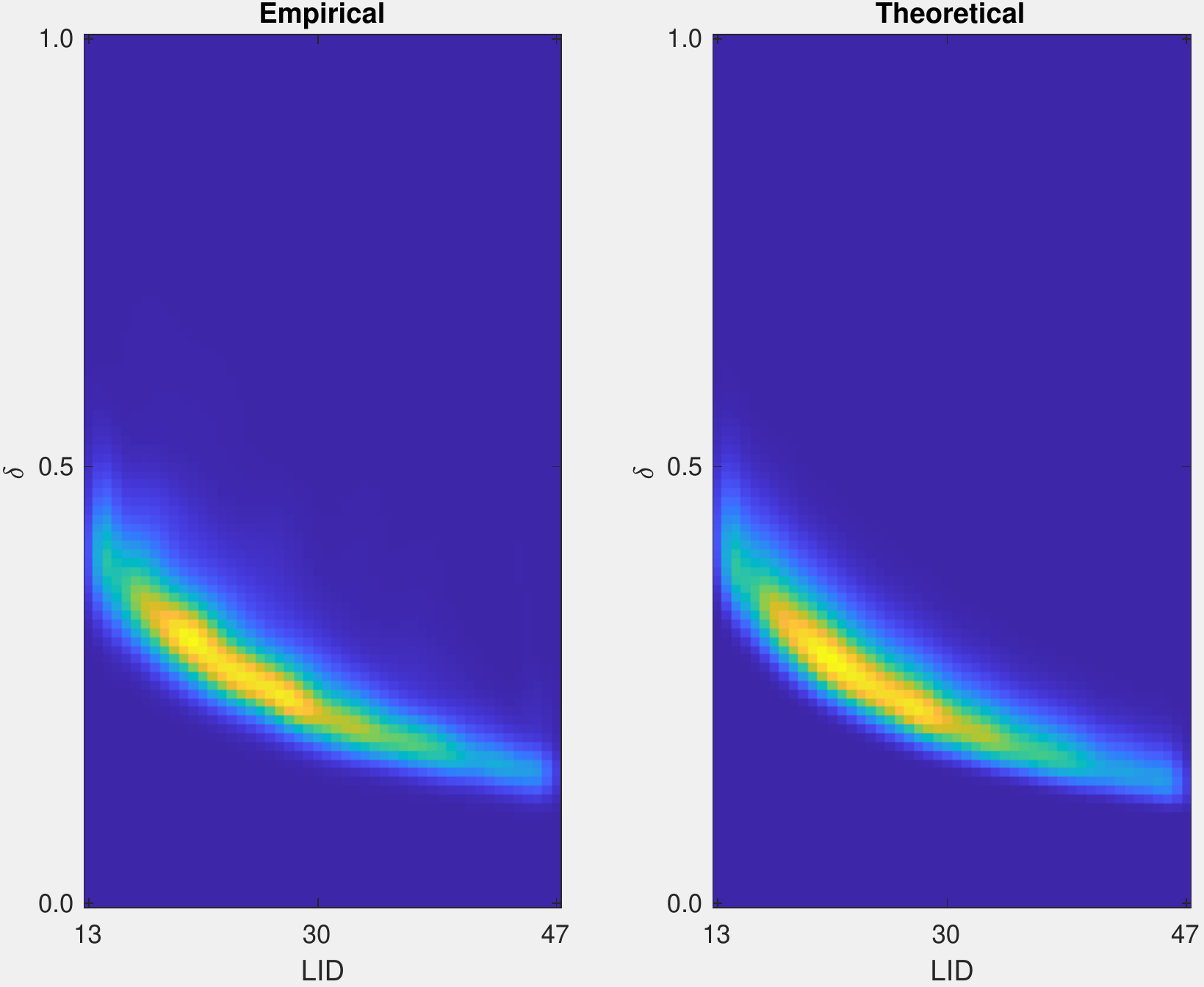}
\caption{
Comparison of the empirical (learned with k.d.e.~\citep{1337238}) and theoretical p.d.f. of the normalized r.v. $\tilde{\Delta}$ defined in Prop.~\ref{prop:Normalized} with $\ell_0 = 10$, $\kx = 1000$, $\kt=1$.}
\label{fig:ImageNetPDF}
\end{center}
\end{figure}

\begin{figure}[h]
\begin{center}
\includegraphics[width=\rr\columnwidth]{./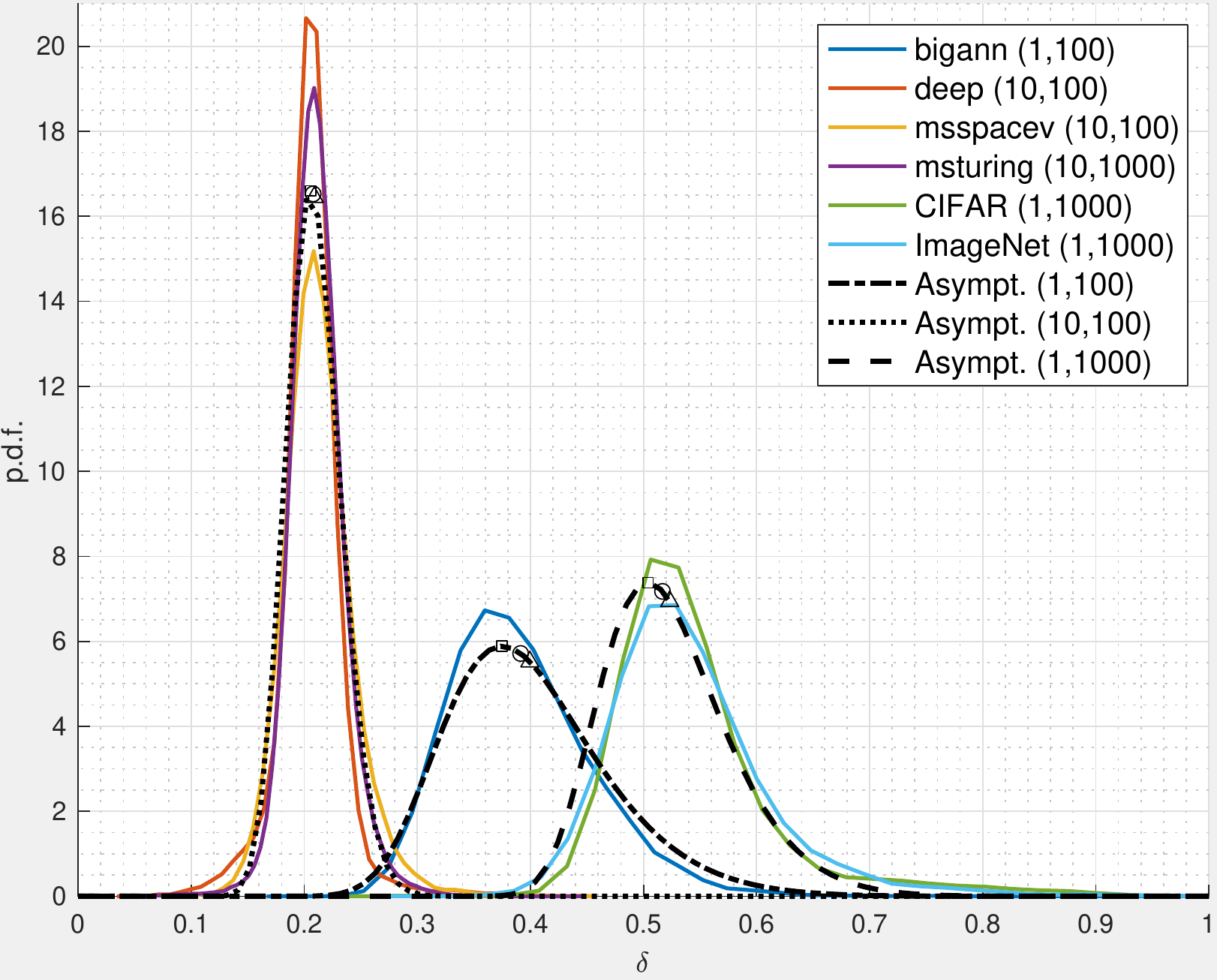}
\caption{Comparison of the empirical p.d.f. of the normalized data $\tilde{\Delta}$ and their related asymptotical models (see Prop.~\ref{prop:Normalized}) with $\ell_0=10$ and $(\kt,\kx)\in\{(1,100), (10,100), (1,1000)\}$. Expectation ($\triangle$), median ($\circ$), and mode ($\square$).}
\label{fig:PlotTrick}
\end{center}
\end{figure}

\subsection{Discrepancies and Explanations}
This section presents the limits of the proposed model.
It first explains the very light discrepancies visible in Fig.~\ref{fig:PlotTrick}, and then exhibits typical scenarios where the model fails.
 
\subsubsection{Convergence Rate}
In Fig.~\ref{fig:PlotTrick}, \cifar\ and \imagenet\ p.d.f. shows a small shift of $\approx 0.03$ compared to the theoretical curve.
This is due to the fact that they do not yet reach the asymptotical regime as their size is not so large (see Tab.~\ref{tab:datasets}).
Indeed, this offset is already visible in Fig.~\ref{fig:Converge} even for dataset size as large as one million.

For the billion scale datasets of Tab.~\ref{tab:datasets}, Fig.~\ref{fig:KSConverge} shows the Kolmogorov-Smirnov goodness of fit test value
between the empirical and the asymptotical c.d.f. of $\tilde{\Delta}$ when downsizing the collection of points by a factor 10 or 100.
This illustrates the slow convergence to the asymptotical model as $n$ increases.

\begin{figure}[tb]
\begin{center}
\includegraphics[width=\rr\columnwidth]{./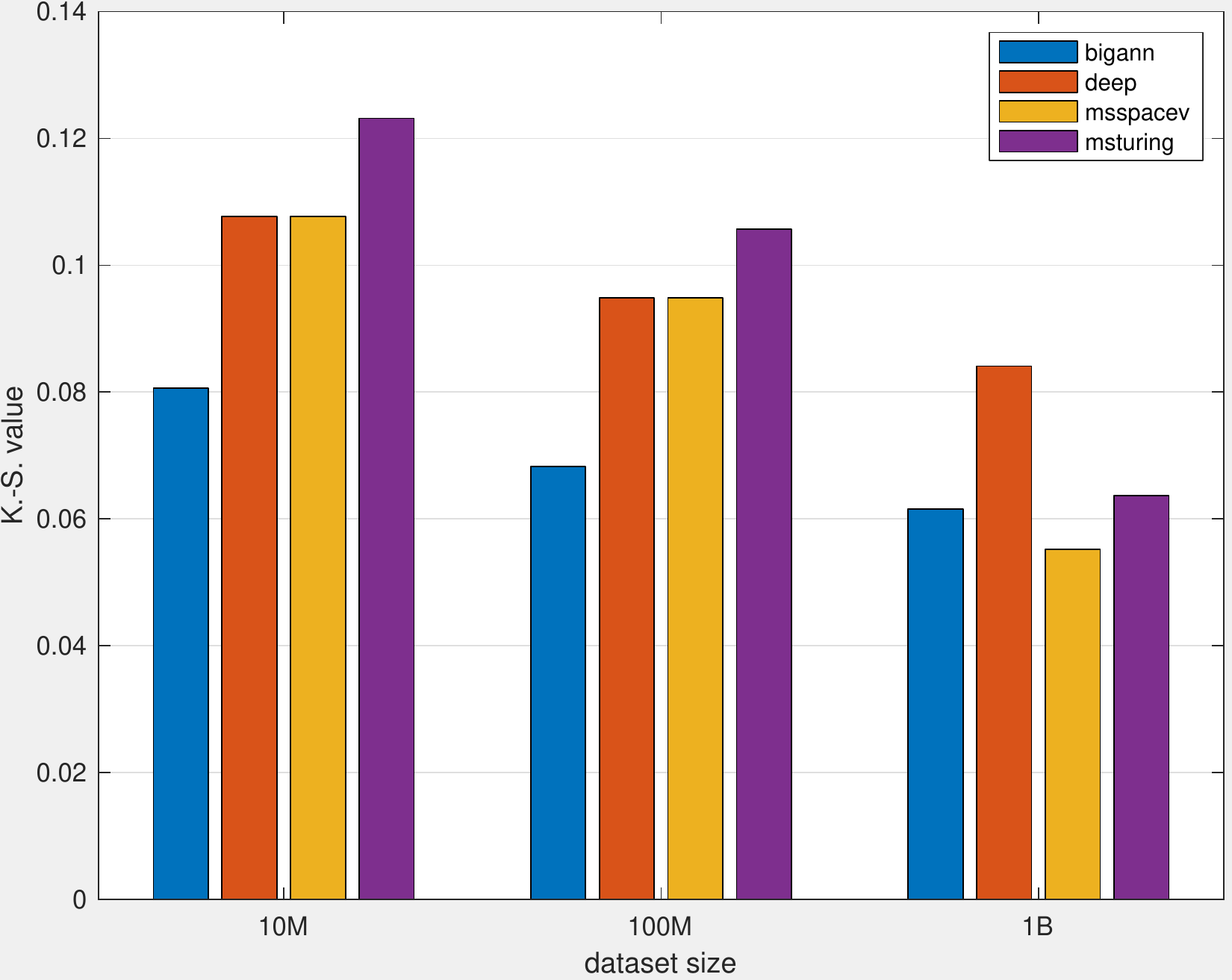}
\caption{Kolmogorov-Smirnov goodness of fit test value w.r.t. to the size of the dataset. Same setup as in Fig.~\ref{fig:PlotTrick}.}
\label{fig:KSConverge}
\end{center}
\end{figure}

\subsubsection{Inaccurate Estimation of the Index $\ell$}
All the empirical p.d.f.'s have a slightly different skewness than the theoretical plot.
We do not have access to the index $\ell$, we can only estimate its value from the distances of the $\kx$ nearest neighbors.
The measurement $\delta$ is normalized into $\tilde{\delta}$ by the estimated index  $\hat{\ell}$.
Even if $\delta$ were occurences of r.v. $\Delta$, $\tilde{\delta}$ does not follow the distribution of Prop.~\ref{prop:Normalized} due to the estimation noise present in $\ell$.

This discrepancy is almost not visible for \cifar, \imagenet, and \bigann.
For these datasets, the estimated index is rarely below 5 as shown in Fig.~\ref{fig:PDF_LID}.
We evaluate the inaccuracy of the Hill estimator by the Kolmogorov-Smirnov goodness of fit test value for typical estimations $\hat{\ell}$.
Figure~\ref{fig:MeanKS} shows that indeed estimating indices $\ell<10$ is difficult. This is especially true for \deep\ and may explain its heavier tail in Fig.~\ref{fig:PlotTrick}. 
The joint p.d.f. displayed in Appendix~\ref{app:ExpRes} confirm this phenomenon: for all datasets, the differences between empirical and theoretical are more visible at lower index value (\ie\ on the left side of the heat map representation).

\begin{figure}[htb]
\begin{center}
\includegraphics[width=\rr\columnwidth]{./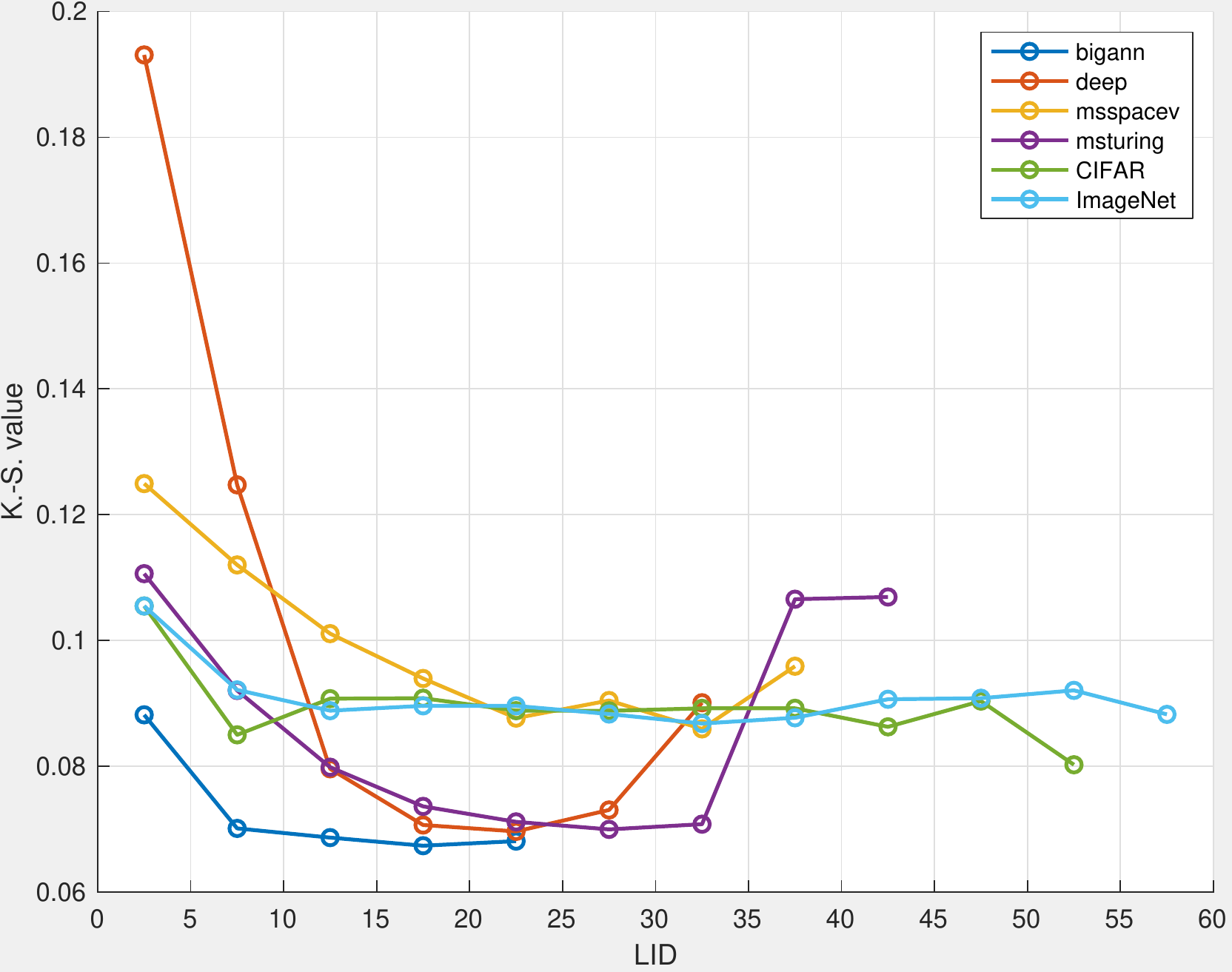}
\caption{Empirical average of Kolmogorov-Smirnov test values as a function of $\hat{\ell}$. The estimated indices are binned into intervals $5\times[k,k+1)$ with $k\in\{0,1,\ldots, 11\}.$}
\label{fig:MeanKS}
\end{center}
\end{figure}

\subsubsection{Mixture of distributions}
Figure~\ref{fig:Failures} shows typical failures of the model.
We use some of the earlier datasets, \spacev\, \turing, and \deep\ but on a specific setup $(\kt,\kx)=(1,100)$.
Note that this setup was not an issue for \bigann in Fig.~\ref{fig:PlotTrick}.

Assumption~\ref{ass:1} supposes that the distances from the query follow a unique distribution.
This might not be the case for some queries. We envisage the following cases:
The $\kx=100$ smallest distances may correspond to:
\begin{enumerate}
\item a subset of documents  matching the query,
\item a subset of documents not matching the query,
\item the union of two matching / non matching subsets.
\end{enumerate}
Cases 1 and 2 comply with assumption~\ref{ass:1}, but not case 3.

\spacev\ and \turing\ are collections of data related to web search with vector encodings of queries and documents.
It is expected that a web search ends up with returned documents ranked first because they have a semantic connection to the query (\ie a match) or because it statically happens that their distances are small. 
Case 3 might also happen with \deep\ which is a collection of features extracted from ``\textit{images of the web}'', dixit~\citet{Yandex-Deep}.
Over such a huge collection of pictures scrapped from the internet, it is likely that there exist duplicates or quasi-copy of some query images in the dataset, yielding case 3.

This is a priori not the case with \cifar\ and \imagenet.
For these last two datasets, the feature vectors are extracted with a classifier deep neural network and the images are coming from a collection meant for classification.
For instance, if the query is an image of a dog, it will match the hundreds of dog images in the collection.
Therefore, case 1 is likely and this does not violate our assumption.
These are our suppositions, there is no way to certify occurrences of case 3.

\def \km {k_m}
We now explain with a hand-waving rationale that the mixture of distributions explains the heavy tails in Fig.~\ref{fig:Failures}.
Suppose for instance that the $\km$ nearest neighbors are matches following a different distribution than the $\kx-\km$ followers, with $\kt\leq \km<\kx$.
Measurement $\hat{\ell}$ may be corrupted by this mixture of distributions, but since it is essentially the inverse of an average over $\kx$ distances~\eqref{eq:Hill},
the estimate is robust provided that $\km\ll\kx$.
On the contrary, measurement $\delta$ is very sensitive to distance $\dt$~\eqref{eq:Equivalence}.
Therefore, if $\kt <\km$, distance $\dt=d_{\kt}$ is expected to be significantly smaller than the others, and $\delta$ gets close to 1.
Moreover, a value $\delta\lesssim 1$ is indeed spread over a wide upper interval due to the normalization mapping tuned with a very low ratio $\nicefrac{\hat{\ell}}{\ell_0}$.
Fig.~\ref{fig:MeanKS} shows that \deep, \spacev, and \turing\ have queries with low estimates $\hat{ell}$.
This amplifies the discrepancy and explains the heavy upper tail.
This means that, under case 3, the perturbation is bigger than statistically predicted by the model.

The fact that these failures do not happen in Fig.~\ref{fig:PlotTrick} where $\kt = 10$, means that $1< \km< 10$ in \deep, \spacev, and \turing.
Indeed, the smaller $\kt$ the more likely a case 3 occurs.

\begin{figure}[ht]
\begin{center}
\includegraphics[width=\rr\columnwidth]{./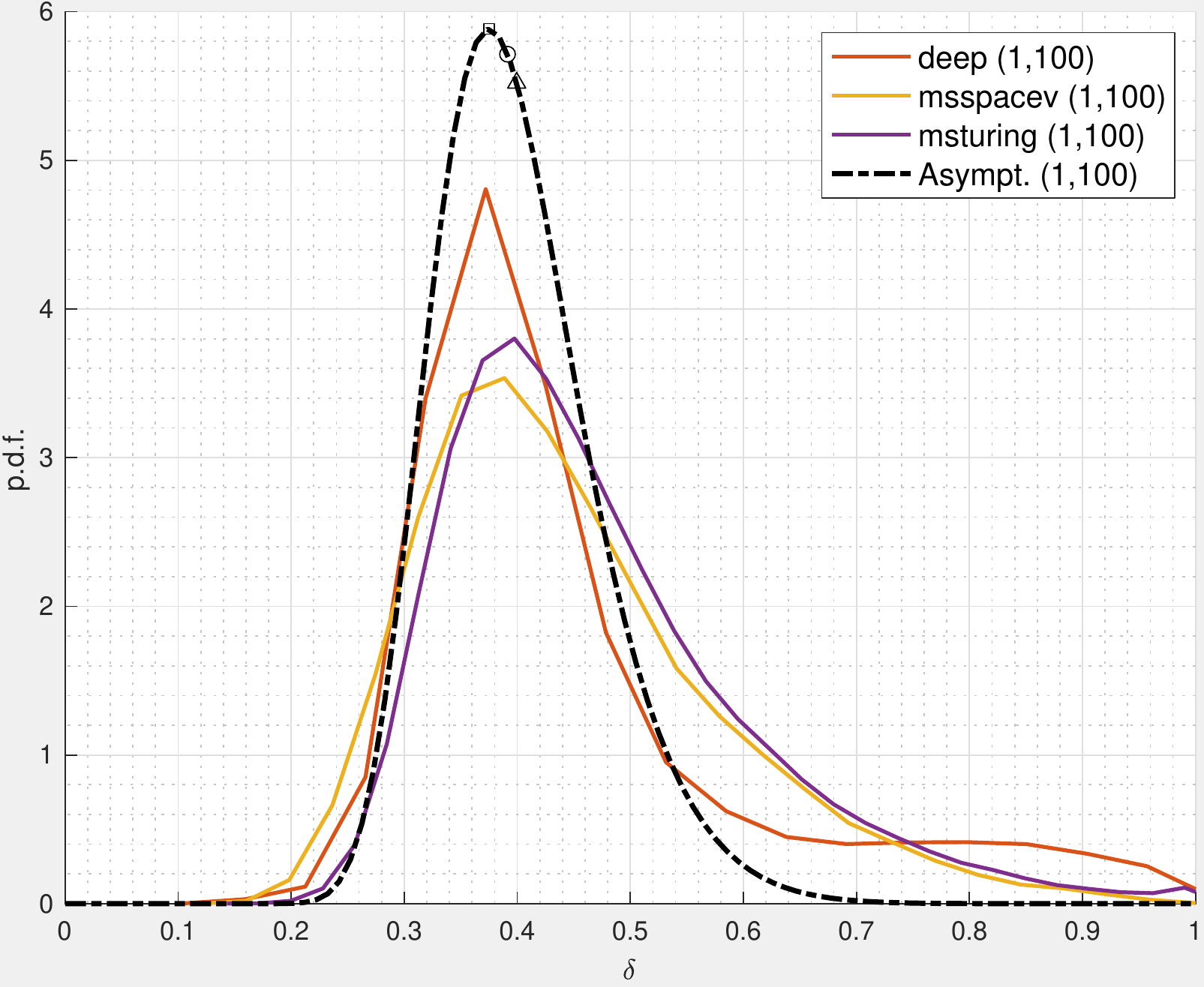}
\caption{Cases of failure. Comparison of the empirical p.d.f. of the normalized data $\tilde{\Delta}$ and their related asymptotical models (see Prop.~\ref{prop:Normalized}) with $\ell_0=10$ and $(\kt,\kx)=(1,100)$. Expectation ($\triangle$), median ($\circ$), and mode ($\square$).}
\label{fig:Failures}
\end{center}
\end{figure}

% !TEX root =  alternate.tex
\section{Discussion}
\label{sec:Discussion}

As in~\cite{9194069}, the analysis can be extended to two cases:
\begin{itemize}
\item The modified point goes away from a reference point in the sense that the target rank $\kt$ is bigger than the actual rank $\kx$.
\item It is the query $\q$ that is perturbed and the dataset points are untouched. The query is pushed closer to (or away from) a target dataset point so that its rank decreases (resp. increases).
\end{itemize}

The second point is easy to handle by re-defining $\Delta_n = \nicefrac{T_n}{X_n} - 1$ with $T_n\geq X_n$.
\begin{proposition}
\label{prop:CDFasyAway}
For two natural integers $\kx < \kt$, the relative amount of perturbation $\Delta_n$ converges in distribution to $\Delta = B^{-\nicefrac{1}{\ell}}-1$ with $B\sim Beta(\kx,\kt-\kx)$, as the size $n$ of the dataset increases.
\end{proposition}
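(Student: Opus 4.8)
The plan is to deduce this directly from Proposition~\ref{prop:CDFasy}, by nothing more than a relabelling of the two ranks, so that no fresh asymptotic analysis is needed. The starting observation is that when $\kx<\kt$ the $\kx$-th nearest neighbour of $\q$ is the \emph{closer} of the two relevant points, i.e. $0<X_n<T_n$ almost surely; this is exactly the geometric configuration treated in Section~\ref{sec:Stat}, only with the ``inner'' rank now played by $\kx$ and the ``outer'' rank by $\kt$. So first I would apply Proposition~\ref{prop:CDFasy} to the pair of integers $(\kx,\kt)$ \emph{in the roles that $(\kt,\kx)$ play there} — its hypotheses require only two integers with the first strictly smaller than the second, together with Assumptions~\ref{ass:1} and~\ref{ass:2}, which are untouched. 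This yields
\begin{equation}
1-\frac{X_n}{T_n}\ \xrightarrow[n\to\infty]{d}\ 1-B^{\nicefrac{1}{\ell}},\qquad B\sim Beta(\kx,\kt-\kx),
\end{equation}
equivalently $X_n/T_n\xrightarrow{d}B^{\nicefrac{1}{\ell}}$.

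The second step is to transport this to $\Delta_n=\nicefrac{T_n}{X_n}-1=h(X_n/T_n)$ with $h(u):=u^{-1}-1$, by the continuous mapping theorem. The only point to verify is that $h$ is continuous on the support of the limit: $h$ is continuous on $(0,1]$, the quantities $X_n/T_n$ all lie in $(0,1)$ since the order statistics are a.s. distinct, and the limiting variable $B^{\nicefrac{1}{\ell}}$ lies in $(0,1)$ almost surely because the Beta law is absolutely continuous. Hence $\Delta_n=h(X_n/T_n)\xrightarrow{d}h(B^{\nicefrac{1}{\ell}})=B^{-\nicefrac{1}{\ell}}-1$ with $B\sim Beta(\kx,\kt-\kx)$, which is the announced statement.

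I expect the only real obstacle here to be mild and purely bureaucratic: keeping straight which rank plays the ``inner'' and which the ``outer'' role when invoking Proposition~\ref{prop:CDFasy}, and checking that nothing in the derivation of~\eqref{eq:CDFasym} secretly relied on $\kt$ being the smaller index in a way that would break under relabelling (it does not — the formula~\eqref{eq:CDFn} it rests on is stated for arbitrary $\kt<\kx\le n$). Should a self-contained argument be preferred, I would instead repeat Appendices~\ref{app:A} and~\ref{app:B} with the joint density of the $\kx$-th and $\kt$-th order statistics for $\kx<\kt$: conditioning on $F(X_n)\sim Beta(\kx,n-\kx+1)$, the event $\{T_n\le(1+\delta)X_n\}$ becomes an incomplete-beta probability whose argument behaves like $\xi\,((1+\delta)^\ell-1)$ via the regular variation $F((1+\delta)F^{-1}(\xi))/\xi\to(1+\delta)^\ell$ as $\xi\to0^+$; passing to the limit through the independent Gamma approximations $n\,Beta(\kx,n-\kx+1)\to\mathrm{Gamma}(\kx)$ and $n\,Beta(\kt-\kx,n-\kt+1)\to\mathrm{Gamma}(\kt-\kx)$ and the identity that $G_{\kx}/(G_{\kx}+G_{\kt-\kx})\sim Beta(\kx,\kt-\kx)$ gives $F_\Delta(\delta)=\Prob\big(B\ge(1+\delta)^{-\ell}\big)$, i.e. $\Delta=B^{-\nicefrac{1}{\ell}}-1$. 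In that route the delicate step is precisely the dominated-convergence/Laplace-method estimate already carried out in Appendix~\ref{app:B}, applied verbatim.
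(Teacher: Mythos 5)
Your proposal is correct and follows essentially the same route as the paper: invoke Proposition~\ref{prop:CDFasy} with the roles of the inner and outer ranks swapped to get $X_n/T_n\xrightarrow{d}B^{\nicefrac{1}{\ell}}$ with $B\sim Beta(\kx,\kt-\kx)$, then pass to $\Delta_n=\nicefrac{T_n}{X_n}-1$. Your explicit continuous-mapping-theorem check (continuity of $u\mapsto u^{-1}-1$ away from $0$, which the absolutely continuous Beta limit avoids a.s.) is a detail the paper's one-line proof leaves implicit.
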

\begin{proof}
From~Prop.~\ref{prop:CDFasy}, we have that the ratio of two neighbor distances converges to $B^{\nicefrac{1}{\ell}}$ with $B$ following a Beta distribution.
Here,  we have $X_n < T_n$, so that $(X_n /T_n)^\ell\leq1$ converges to $Beta(\kx,\kt-\kx)$ and $\Delta_n$ converges to $\Delta = B^{-\nicefrac{1}{\ell}}-1$
\end{proof}

The second point applies as well provided that the index $\ell$ now characterizes the distances from the target dataset point.
As $n$ increases, the query and this dataset point get closer (for a fixed rank $\kx$) and~\citet{9194069}  argue that they asymptotically share the same index value.
In other words, the index $\ell$ can be considered as a very smooth continuous function over the location in the space.

As said in the introduction, the vulnerability of retrieval also depends on the ability to forge a content, say an image, whose extracted feature vector exactly corresponds to $\y$.
This point is out of our scope and \citet{DBLP:journals/corr/SabourCFF15} and \citet{tolias2019targeted} give experimental evidence as far as description with deep neural networks is concerned. If one knows the Lipschitz constant $C$ of the neural network, then $C^{-1}\Delta$ gives a statistical lower bound of the Euclidean distortion in the input space.
Yet, such an image may not exist because the description may not be a surjection.
Targeting point $\y$ reduces the distortion in the feature space but not necessarily in the input space.

\section{Conclusion}
The article~\cite{9194069} gives asymptotical statistics about the relative amount of perturbation outlining its dependence on the Local Intrinsic Dimensionality.  
This new proof refines this result by providing the exact expression of its cumulative distribution function for fixed $n$~\eqref{eq:CDFn} and its convergence to the limit distribution~\eqref{eq:CDFasym}. 
Expectation, median, and mode show similar dependence on the Local Intrinsic Dimensionality index $\ell$.
The relative amount of necessary perturbation not only decreases in expectation but also concentrates around this expectation when  
$\ell$ increases for a fixed rank ratio $\nicefrac{\kt}{\kx}$, or for an increasing rank ratio and a fixed index $\ell$.
 
%\newpage 

%\IEEEtriggeratref{38}
%\bibliographystyle{IEEEtran}
\bibliography{Bib}

\begin{thebibliography}{30}
\providecommand{\natexlab}[1]{#1}
\providecommand{\url}[1]{\texttt{#1}}
\expandafter\ifx\csname urlstyle\endcsname\relax
  \providecommand{\doi}[1]{doi: #1}\else
  \providecommand{\doi}{doi: \begingroup \urlstyle{rm}\Url}\fi

\bibitem[{Amsaleg} et~al.(2020){Amsaleg}, {Bailey}, {Barbe}, {Erfani}, {Furon},
  {Houle}, {Radovanovi{\'c}}, and {Vinh}]{9194069}
L.~{Amsaleg}, J.~{Bailey}, A.~{Barbe}, S.~M. {Erfani}, T.~{Furon}, M.~E.
  {Houle}, M.~{Radovanovi{\'c}}, and N.~X. {Vinh}.
\newblock High intrinsic dimensionality facilitates adversarial attack:
  Theoretical evidence.
\newblock \emph{IEEE Transactions on Information Forensics and Security}, pages
  1--1, 2020.

\bibitem[Athalye et~al.(2018)Athalye, Carlini, and Wagner]{pmlr-v80-athalye18a}
A.~Athalye, N.~Carlini, and D.~Wagner.
\newblock Obfuscated gradients give a false sense of security: Circumventing
  defenses to adversarial examples.
\newblock In J.~Dy and A.~Krause, editors, \emph{Proceedings of the 35th
  International Conference on Machine Learning}, volume~80 of \emph{Proceedings
  of Machine Learning Research}, pages 274--283. PMLR, 10--15 Jul 2018.
\newblock URL \url{https://proceedings.mlr.press/v80/athalye18a.html}.

\bibitem[Babenko and Lempitsky(2016)]{Yandex-Deep}
A.~Babenko and V.~Lempitsky.
\newblock Efficient indexing of billion-scale datasets of deep descriptors.
\newblock In \emph{2016 IEEE Conference on Computer Vision and Pattern
  Recognition (CVPR)}, pages 2055--2063, 2016.
\newblock \doi{10.1109/CVPR.2016.226}.

\bibitem[Bahri et~al.(2020)Bahri, Jiang, and Gupta]{pmlr-v119-bahri20a}
D.~Bahri, H.~Jiang, and M.~Gupta.
\newblock Deep k-{NN} for noisy labels.
\newblock In H.~D. III and A.~Singh, editors, \emph{Proceedings of the 37th
  International Conference on Machine Learning}, volume 119 of
  \emph{Proceedings of Machine Learning Research}, pages 540--550. PMLR, 13--18
  Jul 2020.
\newblock URL \url{https://proceedings.mlr.press/v119/bahri20a.html}.

\bibitem[Bingham et~al.(1987)Bingham, Goldie, and Teugels]{Bingham:1987vz}
N.~H. Bingham, C.~M. Goldie, and J.~L. Teugels.
\newblock \emph{Regular Variation}.
\newblock Encyclopedia of Mathematics and its Applications. Cambridge
  University Press, 1987.
\newblock \doi{10.1017/CBO9780511721434}.

\bibitem[Bonnaillie-No\"el(2004)]{Bonnaillie-Noel:2004aa}
V.~Bonnaillie-No\"el.
\newblock M{\'e}thode de {Laplace} et de la phase stationnaire.
\newblock Technical report, ENS de Cachan, 2004.

\bibitem[Botev et~al.(2010)Botev, Grotowski, and Kroese]{10.1214/10-AOS799}
Z.~I. Botev, J.~F. Grotowski, and D.~P. Kroese.
\newblock {Kernel density estimation via diffusion}.
\newblock \emph{The Annals of Statistics}, 38\penalty0 (5):\penalty0 2916 --
  2957, 2010.
\newblock \doi{10.1214/10-AOS799}.
\newblock URL \url{https://doi.org/10.1214/10-AOS799}.

\bibitem[Chen and Shah(2018)]{8384208}
G.~H. Chen and D.~Shah.
\newblock \emph{Explaining the Success of Nearest Neighbor Methods in
  Prediction}.
\newblock 2018.

\bibitem[Cohen et~al.(2020)Cohen, Sapiro, and Giryes]{Cohen_2020_CVPR}
G.~Cohen, G.~Sapiro, and R.~Giryes.
\newblock Detecting adversarial samples using influence functions and nearest
  neighbors.
\newblock In \emph{Proceedings of the IEEE/CVF Conference on Computer Vision
  and Pattern Recognition (CVPR)}, June 2020.

\bibitem[Devroye et~al.(1994)Devroye, Gyorfi, Krzyzak, and
  Lugosi]{10.2307/2242230}
L.~Devroye, L.~Gyorfi, A.~Krzyzak, and G.~Lugosi.
\newblock On the strong universal consistency of nearest neighbor regression
  function estimates.
\newblock \emph{The Annals of Statistics}, 22\penalty0 (3):\penalty0
  1371--1385, 1994.
\newblock ISSN 00905364.
\newblock URL \url{http://www.jstor.org/stable/2242230}.

\bibitem[Devroye et~al.(1996)Devroye, Gyorfi, and Lugosi]{Devroye:1996wq}
L.~Devroye, L.~Gyorfi, and G.~Lugosi.
\newblock \emph{A Probabilistic Theory of Pattern Recognition}.
\newblock Springer-Verlag, 1996.

\bibitem[Do et~al.(2012)Do, Kijak, Amsaleg, and Furon]{do2012enlarging}
T.-T. Do, E.~Kijak, L.~Amsaleg, and T.~Furon.
\newblock Enlarging hacker's toolbox: deluding image recognition by attacking
  keypoint orientations.
\newblock In \emph{2012 IEEE International Conference on Acoustics, Speech and
  Signal Processing (ICASSP)}, pages 1817--1820. IEEE, 2012.

\bibitem[Feller(1991)]{Feller:1991ut}
W.~Feller.
\newblock \emph{An Introduction to Probability Theory and Its Applications},
  volume~2.
\newblock Wiley, 2nd edition, 1991.

\bibitem[Gu et~al.(2019)Gu, Akoglu, and Rinaldo]{gu2019statistical}
X.~Gu, L.~Akoglu, and A.~Rinaldo.
\newblock Statistical analysis of nearest neighbor methods for anomaly
  detection.
\newblock \emph{arXiv preprint arXiv:1907.03813}, 2019.

\bibitem[H.~A.~David(2003)]{H.-A.-David:2003ts}
H.~N.~N. H.~A.~David.
\newblock \emph{Order Statistics}.
\newblock Number 9780471389262. Wiley, 3rd edition edition, 2003.

\bibitem[Hill(1975)]{10.1214/aos/1176343247}
B.~M. Hill.
\newblock {A Simple General Approach to Inference About the Tail of a
  Distribution}.
\newblock \emph{The Annals of Statistics}, 3\penalty0 (5):\penalty0 1163 --
  1174, 1975.
\newblock \doi{10.1214/aos/1176343247}.
\newblock URL \url{https://doi.org/10.1214/aos/1176343247}.

\bibitem[J{\'e}gou et~al.(2011)J{\'e}gou, Tavenard, Douze, and Amsaleg]{bigann}
H.~J{\'e}gou, R.~Tavenard, M.~Douze, and L.~Amsaleg.
\newblock {Searching in one billion vectors: re-rank with source coding}.
\newblock In \emph{{ICASSP}}, pages 861--864, Prague, Czech Republic, May 2011.
  {IEEE}.
\newblock \doi{10.1109/ICASSP.2011.5946540}.
\newblock URL \url{https://hal.inria.fr/inria-00566883}.

\bibitem[Ma et~al.(2018)Ma, Li, Wang, Erfani, Wijewickrema, Schoenebeck, Song,
  Houle, and Bailey]{ma2018characterizing}
X.~Ma, B.~Li, Y.~Wang, S.~M. Erfani, S.~Wijewickrema, G.~Schoenebeck, D.~Song,
  M.~E. Houle, and J.~Bailey.
\newblock Characterizing adversarial subspaces using local intrinsic
  dimensionality.
\newblock In \emph{International Conference on Learning Representations}, 2018.

\bibitem[Nguyen and Raff(2018)]{nguyen2018adversarial}
A.~T. Nguyen and E.~Raff.
\newblock Adversarial attacks, regression, and numerical stability
  regularization.
\newblock \emph{AAAI'19 Workshop on Engineering Dependable and Secure Machine
  Learning Systems}, arXiv preprint arXiv:1812.02885, 2018.

\bibitem[Papernot and McDaniel(2018)]{papernot2018deep}
N.~Papernot and P.~McDaniel.
\newblock Deep k-nearest neighbors: Towards confident, interpretable and robust
  deep learning.
\newblock \emph{arXiv preprint arXiv:1803.04765}, 2018.

\bibitem[Pham(2004)]{1337238}
D.-T. Pham.
\newblock Fast algorithms for mutual information based independent component
  analysis.
\newblock \emph{IEEE Transactions on Signal Processing}, 52\penalty0
  (10):\penalty0 2690--2700, 2004.
\newblock \doi{10.1109/TSP.2004.834398}.

\bibitem[Qi and Chen()]{spacev}
M.~Qi and Q.~Chen.
\newblock Github of {MSSPACEV}.
\newblock URL
  \url{https://github.com/microsoft/SPTAG/tree/main/datasets/SPACEV1B}.

\bibitem[Sabour et~al.(2016)Sabour, Cao, Faghri, and
  Fleet]{DBLP:journals/corr/SabourCFF15}
S.~Sabour, Y.~Cao, F.~Faghri, and D.~J. Fleet.
\newblock Adversarial manipulation of deep representations.
\newblock In \emph{ICLR}, 2016.
\newblock URL \url{http://arxiv.org/abs/1511.05122}.

\bibitem[Sitawarin and Wagner(2019)]{sitawarin2019robustness}
C.~Sitawarin and D.~Wagner.
\newblock On the robustness of deep k-nearest neighbors.
\newblock In \emph{2019 IEEE Security and Privacy Workshops (SPW)}, pages 1--7.
  IEEE, 2019.

\bibitem[Tolias et~al.(2019)Tolias, Radenovic, and Chum]{tolias2019targeted}
G.~Tolias, F.~Radenovic, and O.~Chum.
\newblock Targeted mismatch adversarial attack: Query with a flower to retrieve
  the tower.
\newblock In \emph{Proceedings of the IEEE/CVF International Conference on
  Computer Vision}, pages 5037--5046, 2019.

\bibitem[Wang et~al.(2018)Wang, Jha, and Chaudhuri]{wang2018analyzing}
Y.~Wang, S.~Jha, and K.~Chaudhuri.
\newblock Analyzing the robustness of nearest neighbors to adversarial
  examples.
\newblock In \emph{International Conference on Machine Learning}, pages
  5133--5142. PMLR, 2018.

\bibitem[Weerasinghe et~al.(2021)Weerasinghe, Abraham, Alpcan, Erfani, Leckie,
  and Rubinstein]{ijcai2021-437}
S.~Weerasinghe, T.~Abraham, T.~Alpcan, S.~M. Erfani, C.~Leckie, and B.~I.~P.
  Rubinstein.
\newblock Closing the big-{LID}: An effective local intrinsic dimensionality
  defense for nonlinear regression poisoning.
\newblock In Z.-H. Zhou, editor, \emph{Proc. of the 30th {IJCAI}}, pages
  3176--3184, 8 2021.
\newblock \doi{10.24963/ijcai.2021/437}.
\newblock URL \url{https://doi.org/10.24963/ijcai.2021/437}.

\bibitem[Wu et~al.(2008)Wu, Kumar, Quinlan, Ghosh, Yang, Motoda, McLachlan, Ng,
  Liu, Philip, et~al.]{wu2008top}
X.~Wu, V.~Kumar, J.~R. Quinlan, J.~Ghosh, Q.~Yang, H.~Motoda, G.~J. McLachlan,
  A.~Ng, B.~Liu, S.~Y. Philip, et~al.
\newblock Top 10 algorithms in data mining.
\newblock \emph{Knowledge and information systems}, 14\penalty0 (1):\penalty0
  1--37, 2008.

\bibitem[Xiao and Wang(2021)]{Xiao_2021_CVPR}
Y.~Xiao and C.~Wang.
\newblock You see what i want you to see: Exploring targeted black-box
  transferability attack for hash-based image retrieval systems.
\newblock In \emph{Proceedings of the IEEE/CVF Conference on Computer Vision
  and Pattern Recognition (CVPR)}, pages 1934--1943, June 2021.

\bibitem[Yang et~al.(2020)Yang, Rashtchian, Wang, and
  Chaudhuri]{pmlr-v108-yang20b}
Y.-Y. Yang, C.~Rashtchian, Y.~Wang, and K.~Chaudhuri.
\newblock Robustness for non-parametric classification: A generic attack and
  defense.
\newblock In S.~Chiappa and R.~Calandra, editors, \emph{Proceedings of the
  Twenty Third International Conference on Artificial Intelligence and
  Statistics}, volume 108 of \emph{Proceedings of Machine Learning Research},
  pages 941--951. PMLR, 26--28 Aug 2020.
\newblock URL \url{https://proceedings.mlr.press/v108/yang20b.html}.

\end{thebibliography}

% !TEX root = alternate.tex
%%%%%%%%%%%%%%%%%%%%%%%%%%%%%%%%%%%%%%%%%%%%%%%%%%%%%%%%%%%%%%%%%%%%%%%%%%%%%%%
%%%%%%%%%%%%%%%%%%%%%%%%%%%%%%%%%%%%%%%%%%%%%%%%%%%%%%%%%%%%%%%%%%%%%%%%%%%%%%%
% APPENDIX
%%%%%%%%%%%%%%%%%%%%%%%%%%%%%%%%%%%%%%%%%%%%%%%%%%%%%%%%%%%%%%%%%%%%%%%%%%%%%%%
%%%%%%%%%%%%%%%%%%%%%%%%%%%%%%%%%%%%%%%%%%%%%%%%%%%%%%%%%%%%%%%%%%%%%%%%%%%%%%%
\newpage
\appendix
\onecolumn
\section{Proof of Proposition~\ref{prop:CDFn}}
\label{app:A}

The c.d.f. $F_{\Delta_n}$ of r.v. $\Delta_n$ is given by:
\begin{equation}
F_{\Delta_n}(\delta) =
\begin{cases}
&0, \quad\forall \delta \leq 0\\
&1, \quad\forall \delta \geq 1,
\end{cases}
\end{equation}
and $\forall \delta \in(0,1)$:
\begin{eqnarray}
F_{\Delta_n}(\delta) &=& \Prob(\Delta_n \leq \delta) = \Prob(X_n(1-\delta) \leq T_n) \\
&=&\int_0^{+\infty} \int_{(1-\delta)x}^{\infty} g_n(t,x) dt.dx\\
&=&\frac{1}{B(\kx,n-\kx+1)}\int_0^{+\infty} H(\delta,x) f(x) [1 - F(x)]^{n-\kx} dx,
%&=&\frac{1}{B(\kx,n-\kx+1)} \times \int_0^{+\infty} H(\delta,x) f(x) [1 - F(x)]^{n-\kx} dx
%&&\int_0^{1} \int_{F((1-\delta)F^{-1}(\xi))}^{1}   \tau^{\kt-1}f(F^{-1}(\tau))[\xi - \tau]^{\kx-\kt - 1}  f(F^{-1}(\xi)) [1 - \xi]^{n-\kx} d\tau d\xi
\end{eqnarray}
with
\begin{eqnarray}
H(\delta,x)&:=&\frac{1}{B(\kt,\kx-\kt)}\int_{(1-\delta)x}^{x} F^{\kt-1}(t)f(t)[F(x) - F(t)]^{\kx-\kt - 1} dt\nonumber\\
%H(\delta,x)&:=&\frac{1}{B(\kt,\kx-\kt)}\int_{(1-\delta)x}^{x} F^{\kt-1}(t)f(t)[F(x) - F(t)]^{\kx-\kt - 1} dt\\
&=& F(x)^{\kx-1}\int_{\frac{F((1-\delta)x)}{F(x)}}^{1} \frac{\tau^{\kt-1} [1-\tau]^{\kx-\kt-1}}{B(\kt,\kx-\kt)}d\tau\nonumber\\
%&=& F(x)^{\kx-1}\int_{F((1-\delta)x) / F(x)}^{1} \frac{1}{B(\kt,\kx-\kt)}\tau^{\kt-1} [1-\tau]^{\kx-\kt-1}d\tau\\
&=& F(x)^{\kx-1} \left(1 - I_{\nicefrac{F((1-\delta)x)}{F(x)}}(\kt,\kx-\kt)\right),
\end{eqnarray}
where the change of variable is defined by $\tau := F(t)/F(x)$ and $I_x(\alpha,\beta)$ is the regularized incomplete beta function.
In the end, with another change of variable $\xi:=F(x)$
\begin{eqnarray}
F_{\Delta_n}(\delta) &=& \frac{1}{B(\kx,n-\kx+1)}\int_0^{1} H(\delta,F^{-1}(\xi)) (1 - \xi)^{n-\kx} d\xi \\
%F_{\Delta_n}(\delta) &=& \frac{1}{B(\kx,n-\kx+1)} \times \int_0^{1} H(\delta,F^{-1}(\xi)) (1 - \xi)^{n-\kx} d\xi\\
&=& \int_0^{1} \left(1 - I_{\nicefrac{F((1-\delta)F^{-1}(\xi))}{\xi}}(\kt,\kx-\kt)\right) \frac{\xi^{\kx-1} (1 - \xi)^{n-\kx}}{B(\kx,n-\kx+1)} d\xi\label{eq:PrevLaplace}\\
&=& 1 - \Exp\left[I_{\nicefrac{F((1-\delta)F^{-1}(\Xi))}{\Xi}}(\kt,\kx-\kt)\right],
%\label{eq:CDFn}
%&=& \frac{1}{B(\kx,n-\kx+1)} \times \int_0^{1} \frac{H(\delta,F^{-1}(\xi))}{(1 - \xi)^{\kx}} e^{n\log(1 - \xi)} d\xi.
\end{eqnarray}
with $\Xi \sim Beta(\kx,n-\kx+1)$.

\section{Proof of Proposition~\ref{prop:CDFasy}}
\label{app:B}
We use the following version of Laplace's method stated in~\cite{Bonnaillie-Noel:2004aa}:
\begin{lemma}
\label{lemma:Laplace}
Let $g,h$ : $(0,1)\to\real$ two continuous functions s.t.
\begin{itemize}
\item $\int_0^1 |g(\xi)|e^{h(\xi)}d\xi < +\infty$
\item $\exists \delta_0>0, \forall \delta\in(0,\delta_0), \forall \xi\in[\delta,1), h(x)< h(\delta)$
\item Around $0^+$,
\begin{eqnarray}
g(\xi) &\approx& A\xi^\alpha, \quad\text{with }\alpha >-1 \\
h(\xi) &\approx& a - c\xi^{\beta} + o(\xi^\beta), \quad\text{with } c>0, \beta > 0
\end{eqnarray}
\end{itemize}
Then
\begin{equation}
\int_0^1 g(\xi)e^{nh(\xi)} d\xi \approx \frac{A}{\beta}\Gamma\left(\frac{\alpha+1}{\beta}\right)e^{an}(cn)^{-\frac{\alpha+1}{\beta}}\quad \mbox{as } n\to +\infty.
\end{equation}
\end{lemma}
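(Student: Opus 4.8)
\emph{Proof proposal.} The plan is to treat this as the standard Laplace-method argument: localize the integral to a shrinking neighborhood of $0^+$, where by the second hypothesis $h$ carries essentially all the mass, and then read off the order of magnitude from the local expansions of $g$ and $h$. First I would strip off the peak value by writing $h(\xi)=a-\phi(\xi)$ with $\phi(\xi)\to 0$ and $\phi(\xi)=c\xi^\beta+o(\xi^\beta)$ as $\xi\to0^+$, so that $\int_0^1 g(\xi)e^{nh(\xi)}\,d\xi=e^{an}\int_0^1 g(\xi)e^{-n\phi(\xi)}\,d\xi$; it then suffices to establish $\int_0^1 g(\xi)e^{-n\phi(\xi)}\,d\xi\sim\frac{A}{\beta}\Gamma\!\left(\frac{\alpha+1}{\beta}\right)(cn)^{-(\alpha+1)/\beta}$ as $n\to\infty$.

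Next I would fix a small $\epsilon\in(0,\delta_0)$ and split the integral over $(0,1)$ at $\epsilon$. On $[\epsilon,1)$ the second hypothesis (applied with $\delta=\epsilon/2$) yields a constant $m_\epsilon<a$ with $h(\xi)\le m_\epsilon$, where $m_\epsilon=h(\epsilon/2)$ is indeed $<a$ once $\epsilon$ is small enough, by the expansion of $h$. Hence $\int_\epsilon^1|g(\xi)|e^{nh(\xi)}\,d\xi\le e^{(n-1)m_\epsilon}\int_\epsilon^1|g(\xi)|e^{h(\xi)}\,d\xi$; the last integral is finite by the first hypothesis, so the tail is $O\!\left(e^{(n-1)m_\epsilon}\right)=o\!\left(e^{an}n^{-(\alpha+1)/\beta}\right)$ because $m_\epsilon<a$. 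This is exactly where the assumption $\int_0^1|g|e^{h}<\infty$ enters.

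For the main contribution $\int_0^\epsilon g(\xi)e^{-n\phi(\xi)}\,d\xi$ I would turn the local expansions into two-sided envelopes. Given $\eta>0$, shrink $\epsilon$ further so that $(A-\eta)\xi^\alpha\le g(\xi)\le(A+\eta)\xi^\alpha$ and $(c-\eta)\xi^\beta\le\phi(\xi)\le(c+\eta)\xi^\beta$ hold on all of $(0,\epsilon]$ (taking $A>0$; the case $A=0$ follows by the same squeeze and $A<0$ by symmetry). Sandwiching $\int_0^\epsilon g(\xi)e^{-n\phi(\xi)}\,d\xi$ between $(A\mp\eta)\int_0^\epsilon\xi^\alpha e^{-n(c\pm\eta)\xi^\beta}\,d\xi$ and substituting $u=nc'\xi^\beta$ in each bounding integral gives $\int_0^\epsilon\xi^\alpha e^{-nc'\xi^\beta}\,d\xi=\frac{1}{\beta}(nc')^{-(\alpha+1)/\beta}\int_0^{nc'\epsilon^\beta}u^{(\alpha+1)/\beta-1}e^{-u}\,du$, and since $\alpha>-1$ the truncated Gamma integral converges to $\Gamma\!\left(\frac{\alpha+1}{\beta}\right)$ as $n\to\infty$. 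Letting $n\to\infty$ with $c'\in\{c-\eta,c+\eta\}$ and then $\eta\to0$ pins the main part to $\frac{A}{\beta}\Gamma\!\left(\frac{\alpha+1}{\beta}\right)(cn)^{-(\alpha+1)/\beta}$; combining with the negligible tail and restoring the $e^{an}$ prefactor completes the proof.

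The main obstacle is the uniformity bookkeeping: making sure the $o(\xi^\beta)$ remainder of $h$ and the relative error of $g$ against $A\xi^\alpha$ can be absorbed simultaneously into the $\pm\eta$ envelopes on one common interval $(0,\epsilon]$, and that the limits are taken in the right order ($n\to\infty$ before $\eta\to0$) so that the error terms, the exponentially small tail, and the truncation of the Gamma integral all become negligible against the leading $n^{-(\alpha+1)/\beta}$ scale. None of this is deep, but it is the part that requires care.
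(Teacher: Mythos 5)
Your proof is correct. Note that the paper offers no proof of this lemma to compare against: it is quoted as a known statement from the cited reference on Laplace's method, so you have supplied the missing argument rather than an alternative one. What you give is the standard endpoint Laplace/Watson argument, and both halves are sound: the tail $[\epsilon,1)$ is killed by writing $e^{nh}=e^{h}e^{(n-1)h}\le e^{h}e^{(n-1)m_\epsilon}$ with $m_\epsilon=h(\epsilon/2)<a$ (this is precisely where the hypotheses $\int_0^1|g|e^{h}<\infty$ and $\sup_{[\delta,1)}h<h(\delta)$ are consumed), and the head $(0,\epsilon]$ is pinned by the two-sided envelopes reducing to truncated Gamma integrals, which converge because $\alpha>-1$. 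Two small points of care, both of which you essentially flag: the lower bound $g(\xi)e^{-n\phi(\xi)}\ge(A-\eta)\xi^\alpha e^{-n(c+\eta)\xi^\beta}$ is only legitimate when $A-\eta\ge 0$, so one should take $\eta<\min(A,c)$ (harmless in the paper's application, where $A=I_{1-(1-\delta)^\ell}(k_x-k_t,k_t)>0$ for $\delta\in(0,1)$); and the second hypothesis as printed has a typo ($h(x)$ for $h(\xi)$) — your reading, $h(\xi)<h(\delta)$ for all $\xi\in[\delta,1)$, is the intended one and is exactly what makes $m_\epsilon<a$ available.
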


%Before applying this lemma, we need the following assumption:
%\begin{assumption}
%The c.d.f. $F(\cdot)$ is a regularly varying function around $0^+$ (as defined by J. Karamata).
%\end{assumption}
%This assumption holds from the theory of extreme values. As $n$ increases, the distances of the $\kx$ and $\kt$ nearest neighbour tap into the lower tail of the distribution, which ought to be regularly varying because it is lower bounded by 0. 
%This implies that, there exists a parameter $\ell>0$ (so-called index of regular variation, or intrinsic dimensionality in~\cite{9194069}) s.t.
%the c.d.f. of the distance between a dataset point and the query is shaped as $F(x) = \gamma x^{\ell} + o(x^{\ell})$ with $\ell >0$ and $\gamma > 0$, so that

We then rewrite \eqref{eq:PrevLaplace} in the form $\int_0^1 g(\xi)e^{nh(\xi)} dx / B(\kx,n-\kx+1)$ with
\begin{eqnarray}
h(\xi) &=& \log(1-\xi)\\
g(\xi) &=& \left(1 - I_{\nicefrac{F((1-\delta)F^{-1}(\xi))}{\xi}}(\kt,\kx-\kt)\right) \xi^{\kx-1} (1 - \xi)^{-\kx},
\end{eqnarray}
Thanks to the assumption~\eqref{eq:LID}, we know that
\begin{equation}
 \lim_{\xi\to 0^+} \frac{F((1-\delta)F^{-1}(\xi))}{\xi} = \lim_{x\to 0^+} \frac{F((1-\delta)x)}{F(x)} = (1-\delta)^\ell.
\end{equation}
%around $0^+$, the c.d.f. of the distance between a dataset point and the query is shaped as $F(x) = \gamma x^{\ell} + o(x^{\ell})$ with $\ell >0$ and $\gamma > 0$, so that
%\begin{equation}
%\lim_{x\to 0^+} \frac{F((1-\delta)x)}{F(x)} = \lim_{\xi\to 0^+} \frac{F((1-\delta)F^{-1}(\xi))}{\xi} = (1-\delta)^\ell.
%\end{equation}

This allows to instantiate the constants defined in Lemma~\ref{lemma:Laplace} as
\begin{eqnarray}
A = I_{1-(1-\delta)^\ell}(\kx-\kt,\kt), \quad \alpha &=& \kx -1,\nonumber\\
a = 0, \quad c = 1, \quad \beta &=& 1.
\end{eqnarray}
This leads to the following asymptotical expression:
\begin{equation}
F_{\Delta_n}(\delta) \approx I_{1-(1-\delta)^\ell}(\kx-\kt,\kt) \frac{n!}{n^{\kx} (n-\kx)!}\quad \mbox{as } n\to +\infty,
\end{equation}
and to the limit function $F_\Delta$:\\
$\forall \delta\in(0,1)$
\begin{equation}
%F_{\Delta_n}(\delta) \approx I_{1-(1-\delta)^\ell}(\kx-\kt,\kt) \frac{n!}{n^{\kx} (n-\kx)!} \overset{n\to+\infty}{\rightarrow} I_{1-(1-\delta)^\ell}(\kx-\kt,\kt).
F_{\Delta}(\delta):= \lim_{n\to+\infty} F_{\Delta_n}(\delta) = I_{1-(1-\delta)^\ell}(\kx-\kt,\kt),
\end{equation}
thanks to the well-known asymptotic convergence $\Gamma(n+\alpha)\approx\Gamma(n)n^\alpha$, $\forall\alpha\in\mathbb{C}$.

%%%%%%%%%%%%%%%%%%%%%%%%%%%%%%%%%%%%%%%%%%%%%%%%%%%%%%%%%%%%%%%%%%%%%%%%%%%%%%%%%%%%
\section{Proof of proposition~\ref{prop:StatDelta}}
\label{app:C}
This appendix uses the fact that $\Delta = 1 - B^{\nicefrac{1}{\ell}}$ with $B\sim Beta(\kt,\kx-\kx)$.
\subsection{Expectation}
We did not find any close-form expression for $\Exp[\Delta]$.
Yet, the expectation and the variance of $B\sim Beta(\kt,\kx-\kx)$ equal
\begin{equation}
\Exp[B] = \frac{\kx-\kt}{\kx}, \quad\Var[B] = \frac{\Exp[B](1-\Exp[B])}{(\kx+1)}.
\end{equation}
We see that $B$ concentrates around its expectation as $\kx$ becomes large.
The second order Taylor series around $\Exp[B]$ gives
\begin{eqnarray}
\Delta &=& 1 - B^{\nicefrac{1}{\ell}}\\
&=& A_0 + A_1(B-\Exp[B]) + A_2(B-\Exp[B])^2 + o((B-\Exp[B])^2)\quad\text{with}\\
A_0 &=& 1 - \Exp[B]^{\nicefrac{1}{\ell}}, \quad A_1= - \nicefrac{1}{\ell}\Exp[B]^{\nicefrac{1}{\ell}-1}, \quad A_2 = - \nicefrac{1}{2\ell}(\nicefrac{1}{\ell}-1)\Exp[B]^{\nicefrac{1}{\ell}-2}.
\end{eqnarray}
Applying the expectation operator on both sides of the equation we obtain: for $\kx\gg1$,
\begin{eqnarray}
\Exp[\Delta] &\approx& 1-\Exp[B]^{\nicefrac{1}{\ell}} -  \frac{1}{2\ell}\left(\frac{1}{\ell} -1\right)\Exp[B]^{\nicefrac{1}{\ell}-2}\Var[B]
=1 - \left(\frac{\kt}{\kx}\right)^{\nicefrac{1}{\ell}}\left(1 - \frac{\ell-1}{2\ell^2} \frac{\kx-\kt}{\kt(\kx+1)}\right).
%&\approx& 1 - \left(\frac{\kt}{\kx}\right)^{\nicefrac{1}{\ell}}\left(1 - \frac{\ell-1}{2\ell^2} \frac{\kx-\kt}{\kt(\kx+1)}\right).
\end{eqnarray}

\subsection{Median}
Translating quantiles of $B$ to quantiles of $\Delta$ is easy as $x\to 1 - x^{\nicefrac{1}{\ell}}$ is a strictly decreasing function over $(0,1)$:
The $p$-quantile of $B$ equals $I^{-1}_p(\kt,\kx-\kt)$ so that the $p$-quantile of $\Delta$ equals $1 - (I^{-1}_{1-p}(\kt,\kx-\kt))^{\nicefrac{1}{\ell}}$.
For instance, the median $\bar{\Delta}$ of $\Delta$ is simply
$\bar{\Delta}  = 1-\bar{B}^{\nicefrac{1}{\ell}}$.
The median $\bar{B}$ of $Beta(\alpha,\beta)$ is usually approximated by $(\alpha-\nicefrac{1}{3})/(\alpha+\beta-2/3)$ for $\alpha\geq 2$ and $\beta\geq 2$.
This translates to the following approximation: for $\kt \geq 2$ and $\kx \geq \kt + 2$:
\begin{eqnarray}
\bar{\Delta}  &\approx& 1 - \left(\frac{\kt - \nicefrac{1}{3}}{\kx - \nicefrac{2}{3}}\right)^{\nicefrac{1}{\ell}}
= 1 - (\nicefrac{\kt}{\kx})^{\nicefrac{1}{\ell}}\left(1 + \frac{2\kt - \kx}{\kt(3\kx-2)}\right)^{\nicefrac{1}{\ell}}.
%\label{eq:AppMed}
\end{eqnarray}

\subsection{Mode}
We first compute the p.d.f. by derivating the c.d.f $F_\Delta$~\eqref{eq:CDFasym}:
\begin{equation}
f_\Delta(\delta|\ell) = \frac{\ell}{B(\kx-\kt,\kt)} \left(1 - (1-\delta)^\ell\right)^{\kx-\kt-1}(1-\delta)^{\kt\ell-1}\mathds{1}_{[0,1]}(\delta).
\label{eq:PDFasym}
\end{equation}
If $\kt\ell>1$ and $\kx-\kt>1$, the derivative of the p.d.f. cancels in three points: $\delta\in\{0, \delta_m, 1\}$.
Since $f_\Delta(0) = f_\Delta(1) = 0$, and $f_\Delta(\delta)\geq 0, \, \forall \delta\in(0,1)$, this shows that
\begin{equation}
\delta_m := 1 - \left(\frac{\kt\ell-1}{(\kx-1)\ell-1}\right)^{\nicefrac{1}{\ell}} =1 - \left(\frac{\kt}{\kx}\right)^{\nicefrac{1}{\ell}}\left(1 + \frac{(\ell+1)\kt-\kx}{\ell\kx\kt -(\ell+1)\kt}\right)^{\nicefrac{1}{\ell}}
\end{equation}
is the unique maximum, hence the mode of this distribution.

\subsection{Special Case: $\kt=1$}
\def \a {\ell}
\def \b {\kx-1}
If $\kt = 1$, then $B = (1-\Delta)^\ell\sim Beta(1,\kx-1)$  and $K := B^{\nicefrac{1}{\ell}} = 1-\Delta$ follows by definition a Kumaraswamy distribution $\mathcal{K}(\a,\b)$.
This gives simpler formulas:
\begin{itemize}
\item Expectation:
\begin{equation}
\Exp[\Delta] = 1 - \Exp[B] = 1 - (\b)B(1+\nicefrac{1}{\a},\b).%\frac{(\b) \Gamma(1+\nicefrac{1}{\a})\Gamma(\b)}{\Gamma(1+\nicefrac{1}{\a} + \b)}
\end{equation}
\item Median:
\begin{equation}
\bar{\Delta} = 1 - \bar{K} = 1 - \left(1-2^{-\frac{1}{\b}}\right)^{\nicefrac{1}{\a}}.
\end{equation}
\item Mode:
\begin{equation}
\mathsf{mode} = 1 -  \left(\frac{\a-1}{(\b)\a-1}\right)^{\nicefrac{1}{\a}}.
\end{equation}
\end{itemize}

\section{Experimental Results}
\label{app:ExpRes}
This appendix shows figures
comparing on the left the empirical joint p.d.f. $\pi(\ell,\delta)$ (learned with k.d.e.~\citep{10.1214/10-AOS799}) and its theoretical counterpart, and on the right
the comparison of the empirical (learned with k.d.e.~\citep{1337238}) and theoretical p.d.f. of the normalized r.v. $\tilde{\Delta}$ defined in Prop.~\ref{prop:Normalized} (with $\ell_0=10$)
for the datasets \bigann, \cifar, \imagenet, \deep, \spacev, and \turing\ (one per page).
Theoritical expectation ($\triangle$), median ($\circ$), and mode ($\square$) are also displayed.

%\newpage
\def \rr{4}
%\subsection{\bigann}
%This dataset is introduced in~\citep{bigann}, and is available with different sizes (10M, 100M, 1B) at  

\begin{figure}[h]
\begin{center}
\includegraphics[width = 0.3\textwidth,height=\rr cm]{./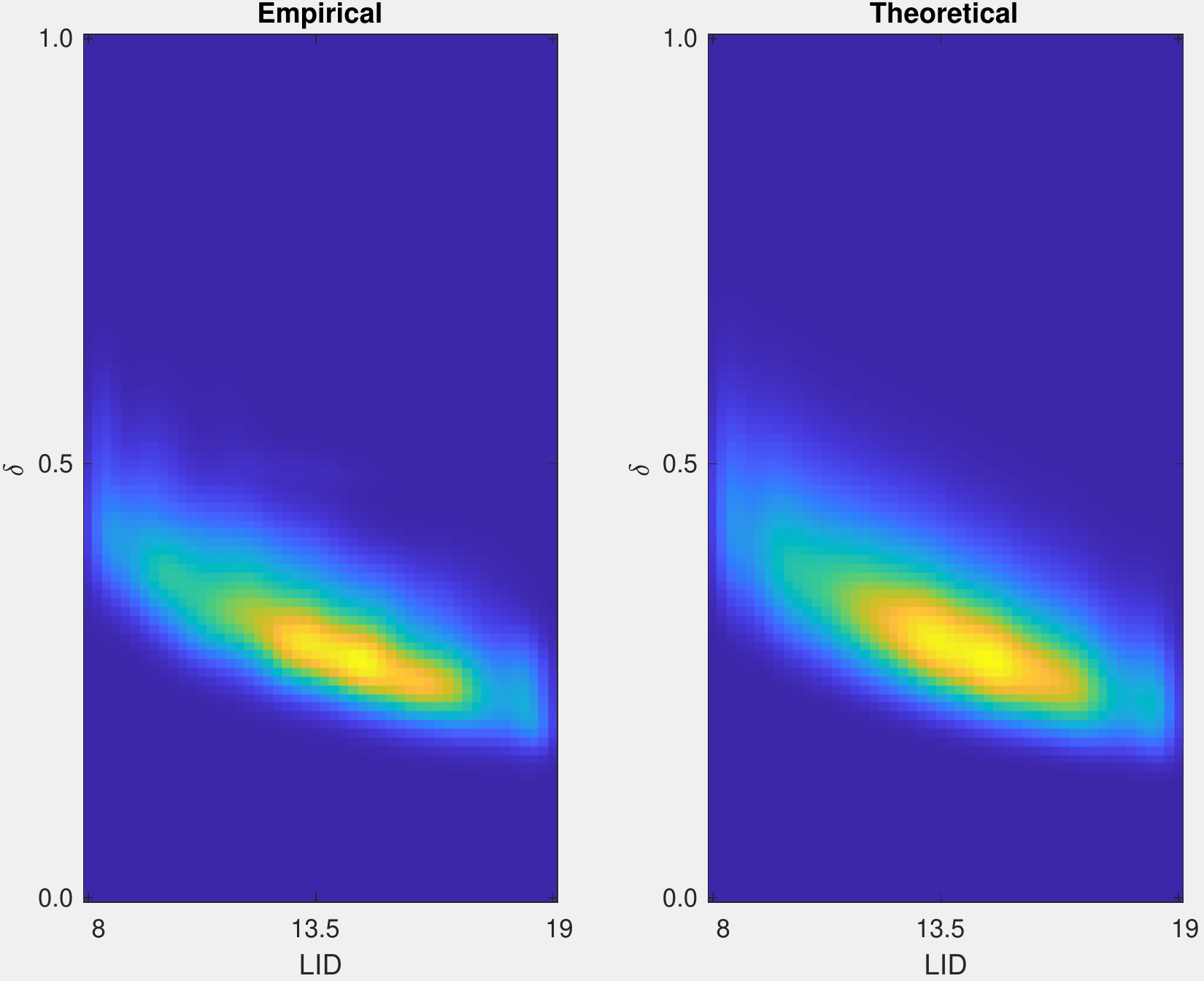}
\includegraphics[width = 0.3\textwidth,height=\rr cm]{./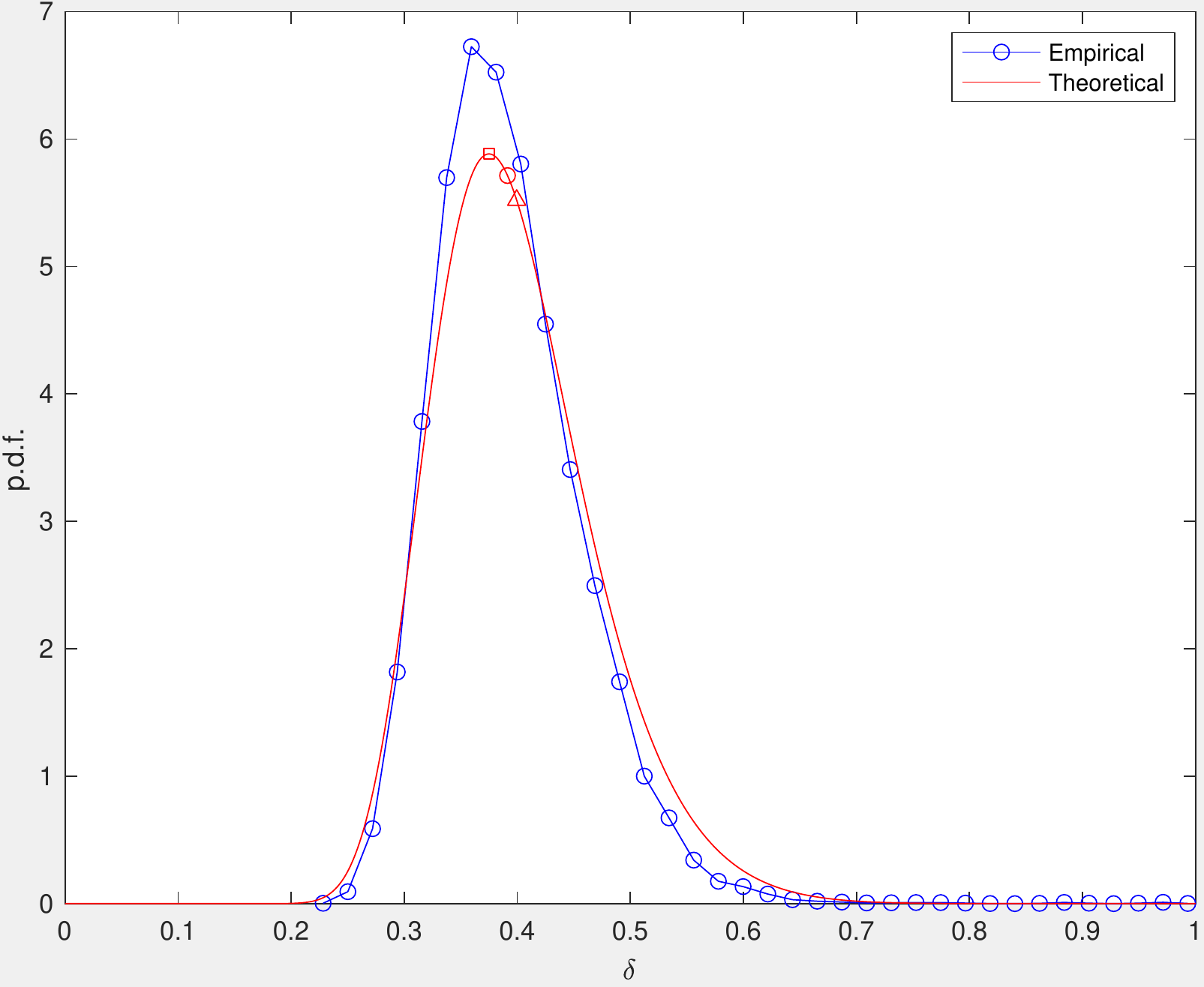}
\caption{\bigann\ with $\kx = 100$, $\kt=1$.}
%\label{fig:bigannPDF}
\end{center}
\end{figure}

\begin{figure}[h]
\begin{center}
\includegraphics[width = 0.3\textwidth,height=\rr cm]{./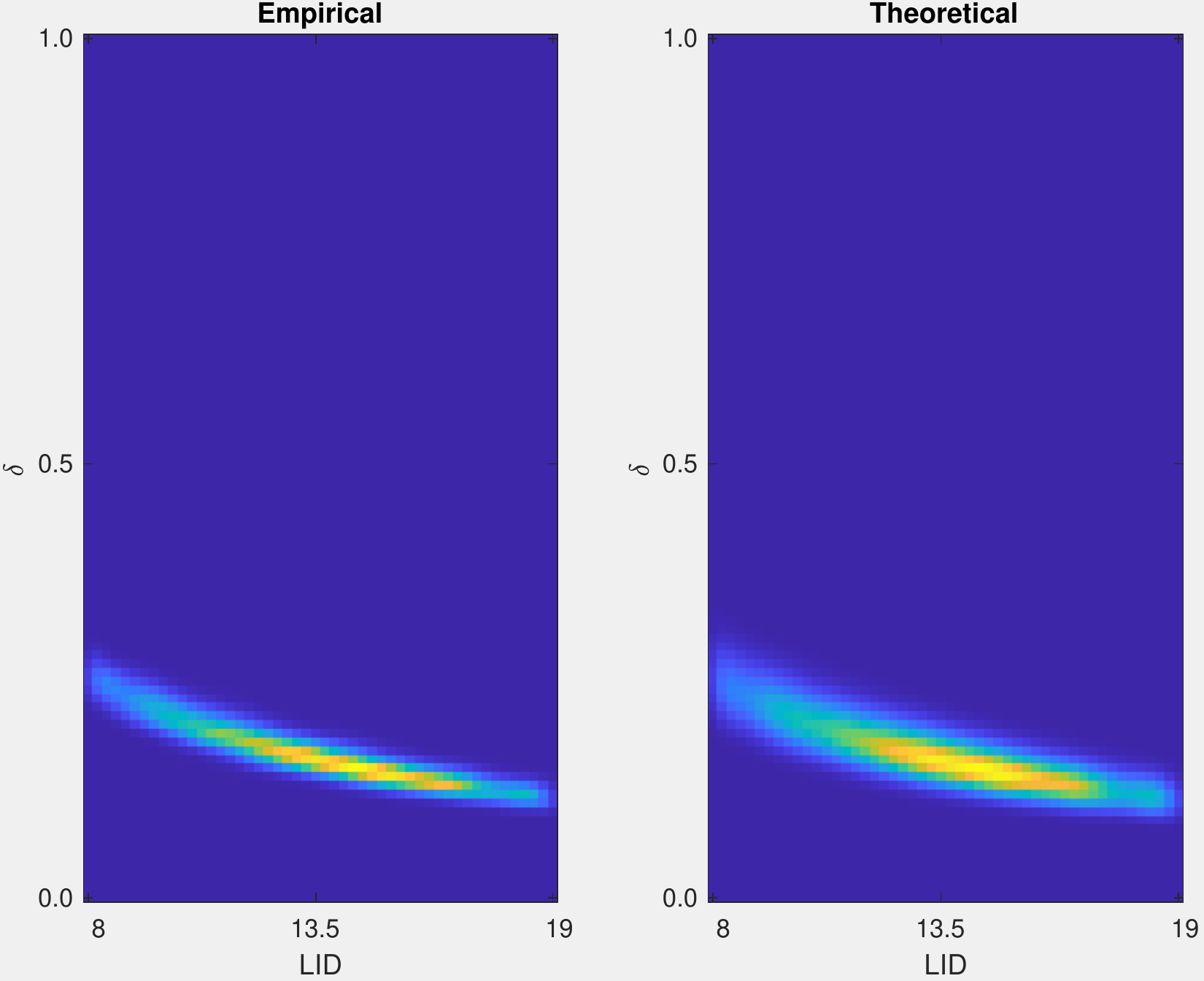}
\includegraphics[width = 0.3\textwidth,height=\rr cm]{./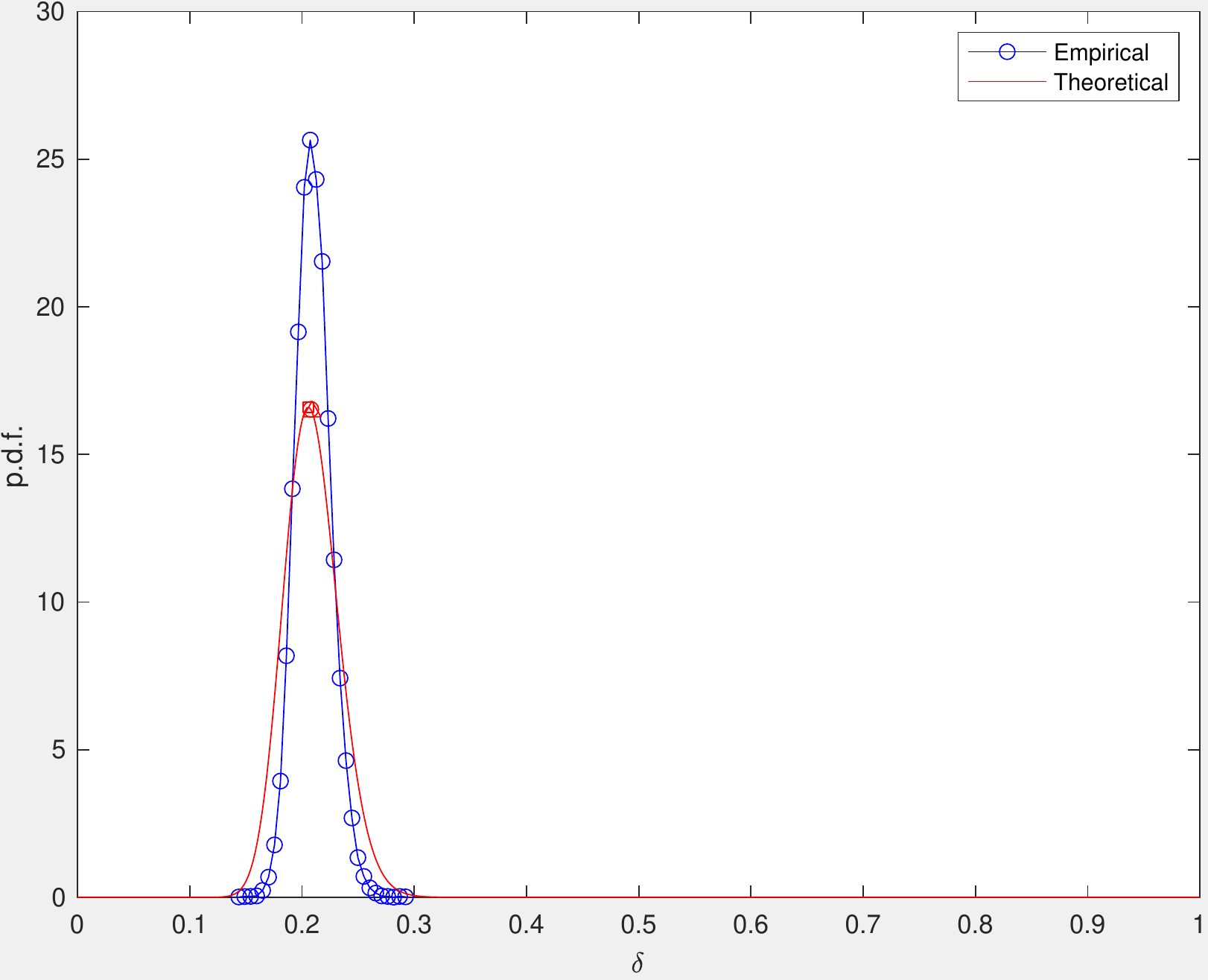}
\caption{\bigann\ with $\kx = 100$, $\kt=10$.}
%\label{fig:bigannPDF}
\end{center}
\end{figure}

%\newpage
%\subsection{\deep}
\begin{figure}[h]
\begin{center}
\includegraphics[width = 0.3\textwidth,height=\rr cm]{./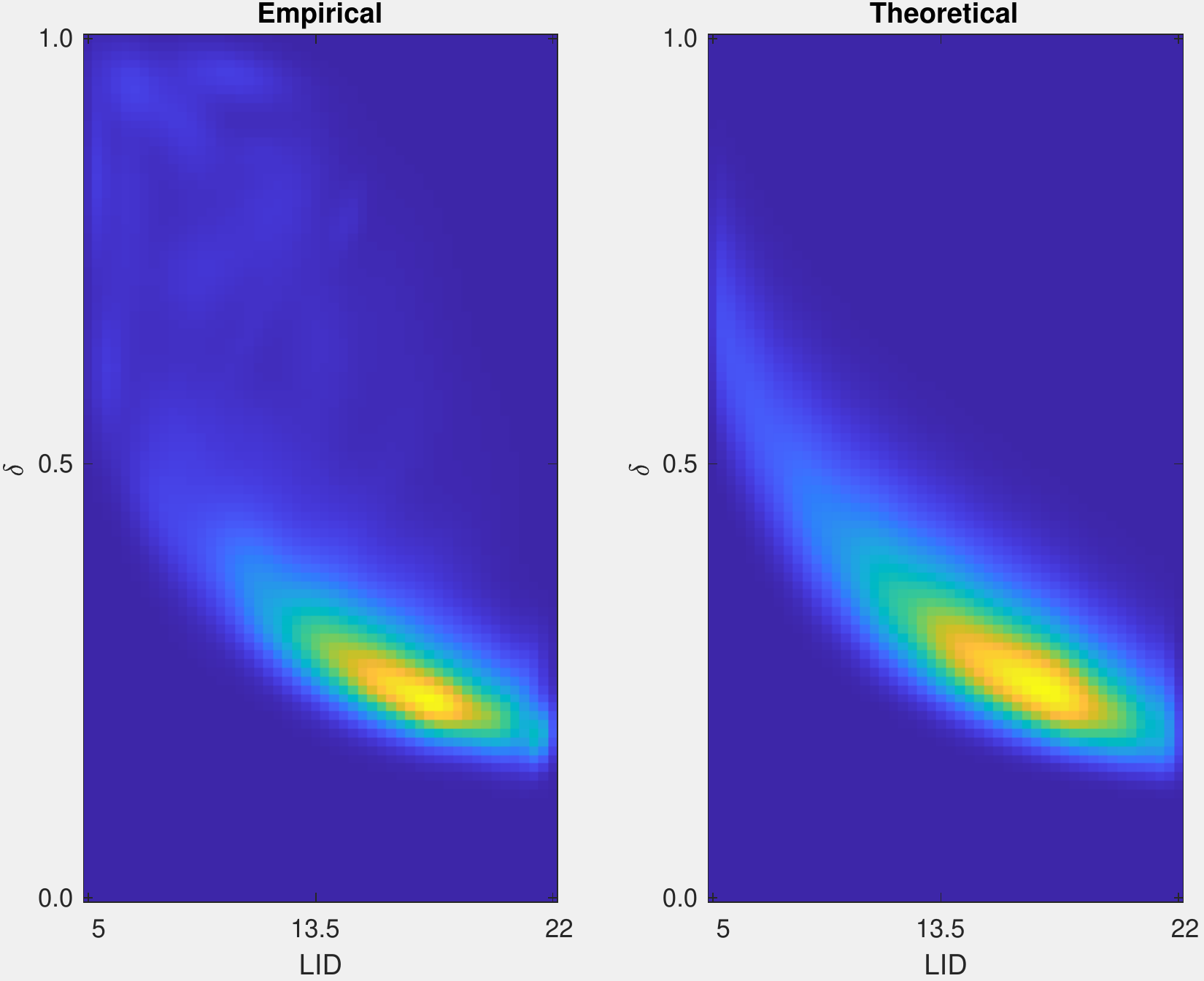}
\includegraphics[width = 0.3\textwidth,height=\rr cm]{./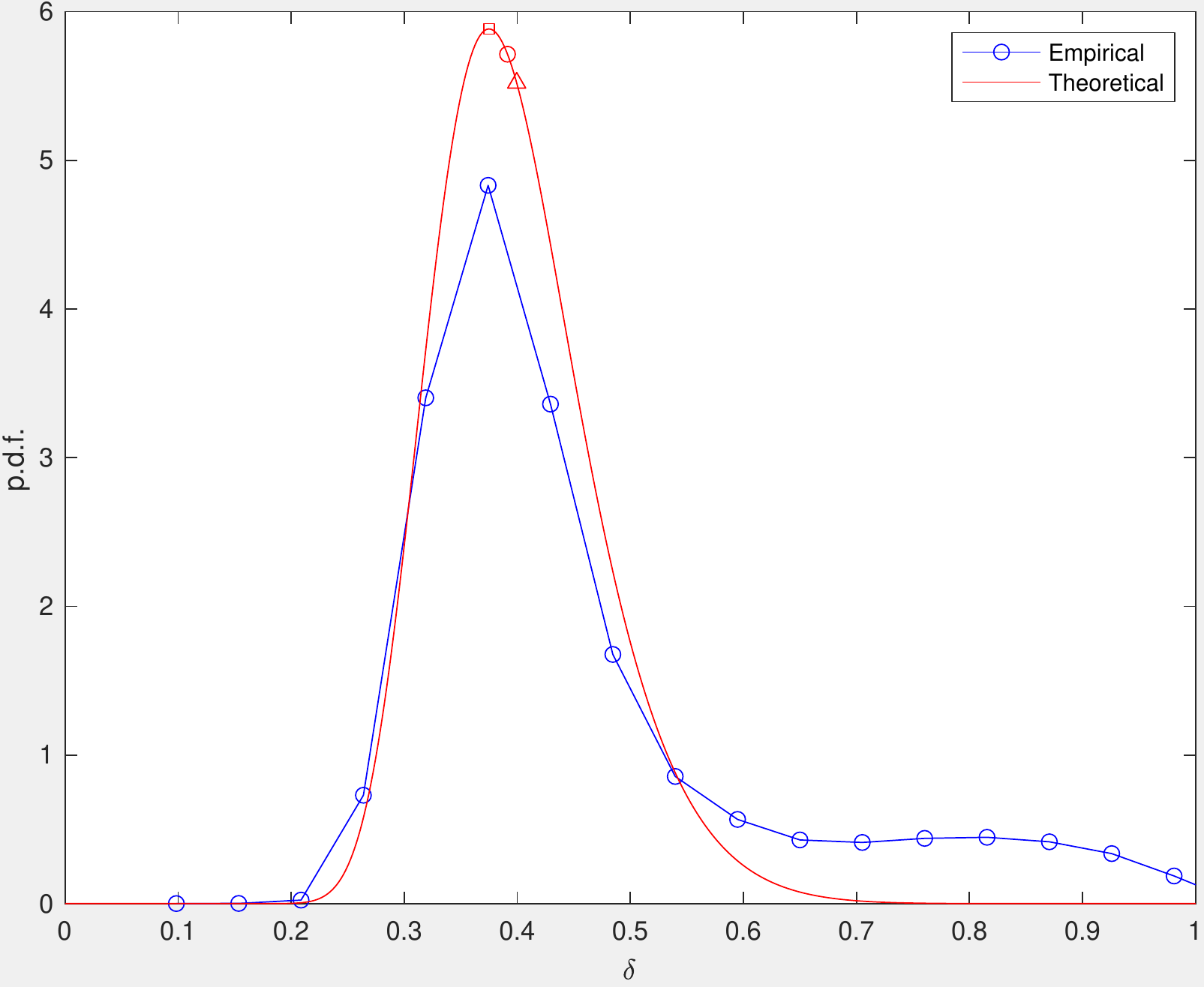}
\caption{\deep\ with $\kx = 100$, $\kt=1$.}
%\label{fig:deepPDF}
\end{center}
\end{figure}

\begin{figure}[h]
\begin{center}
\includegraphics[width = 0.3\textwidth,height=\rr cm]{./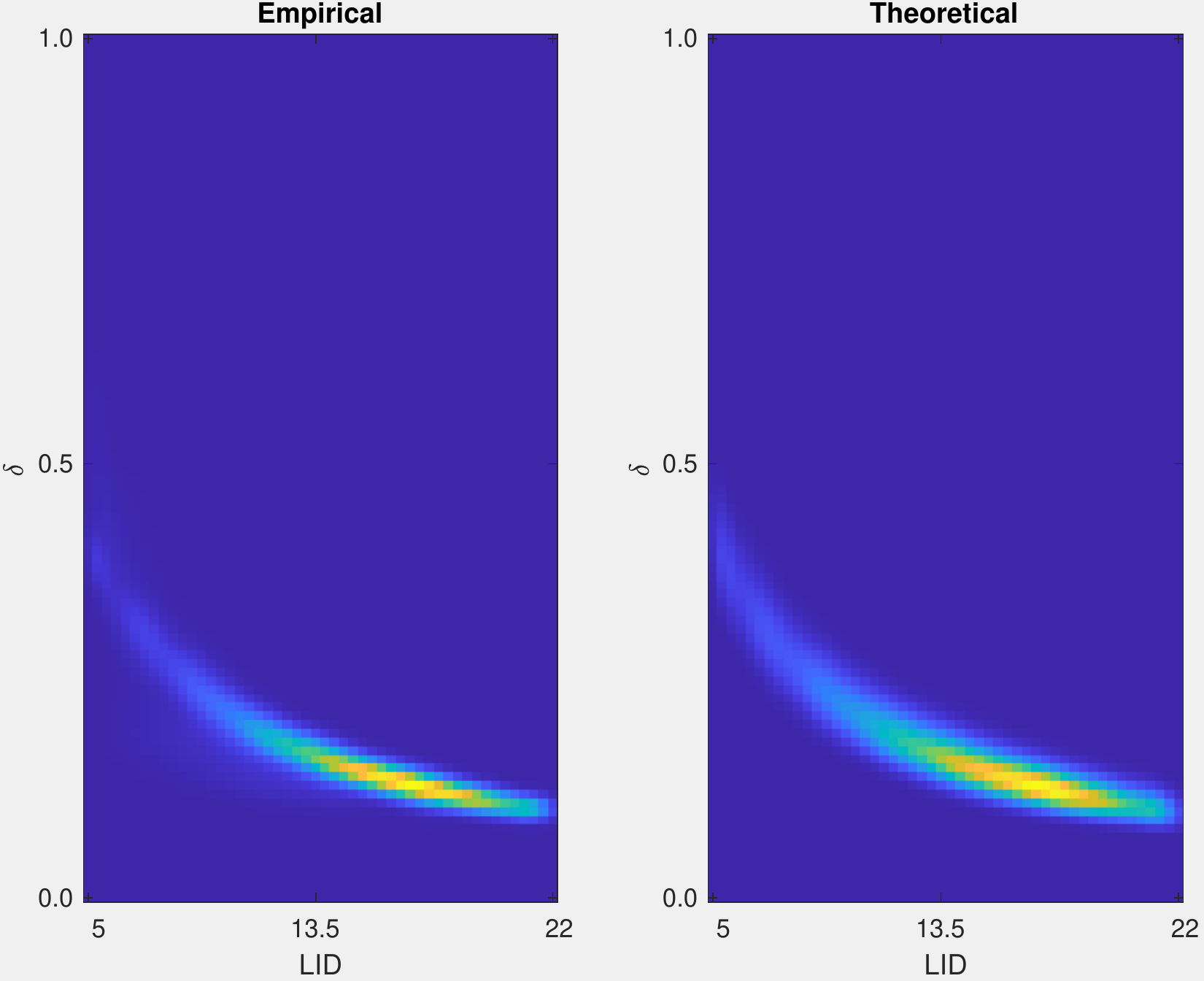}
\includegraphics[width = 0.3\textwidth,height=\rr cm]{./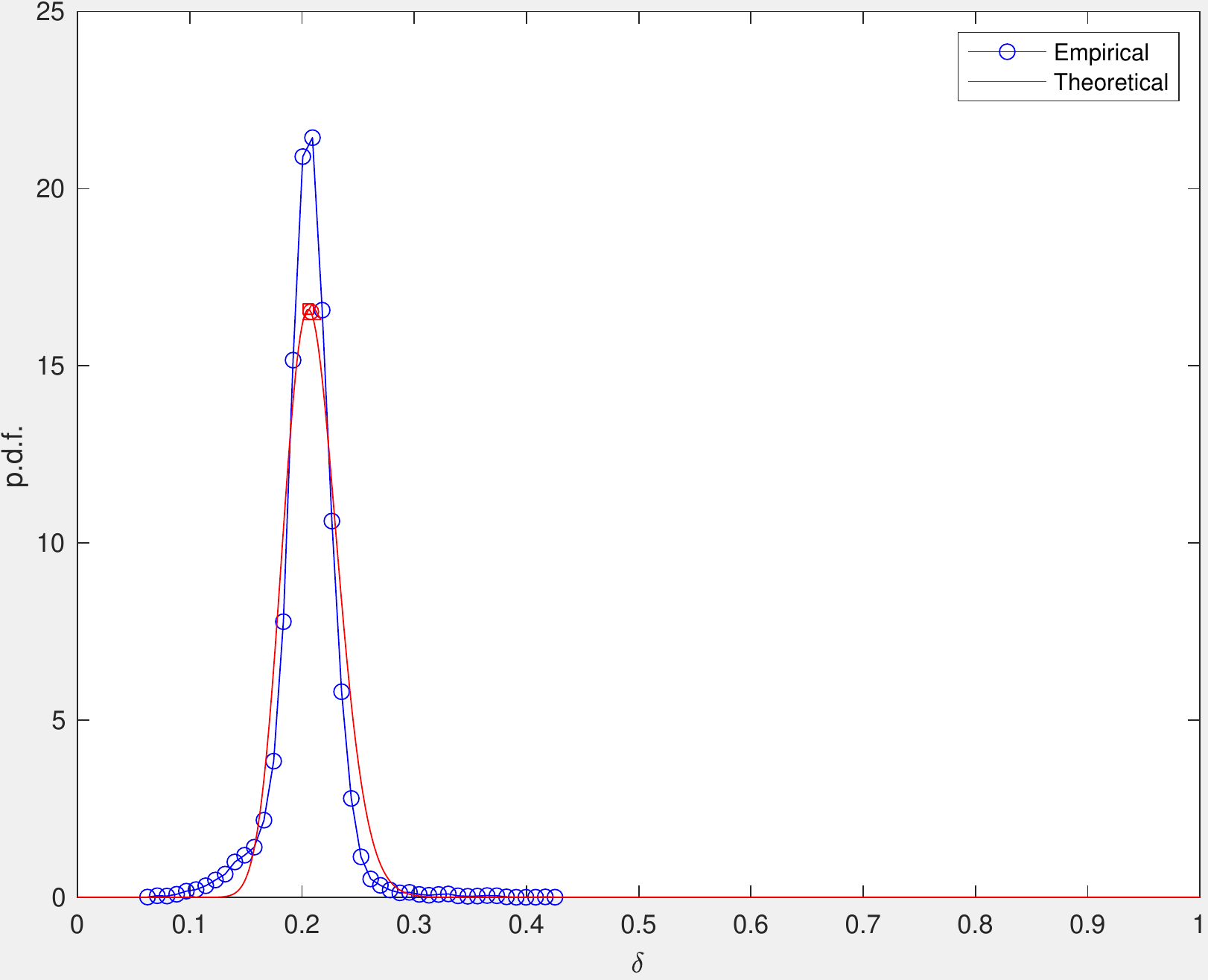}
\caption{\deep\ with $\kx = 100$, $\kt=10$.}
%\label{fig:deepPDF}
\end{center}
\end{figure}

%\FloatBarrier

%\newpage
%\subsection{\spacev}
\begin{figure}[h]
\begin{center}
\includegraphics[width = 0.3\textwidth,height=\rr cm]{./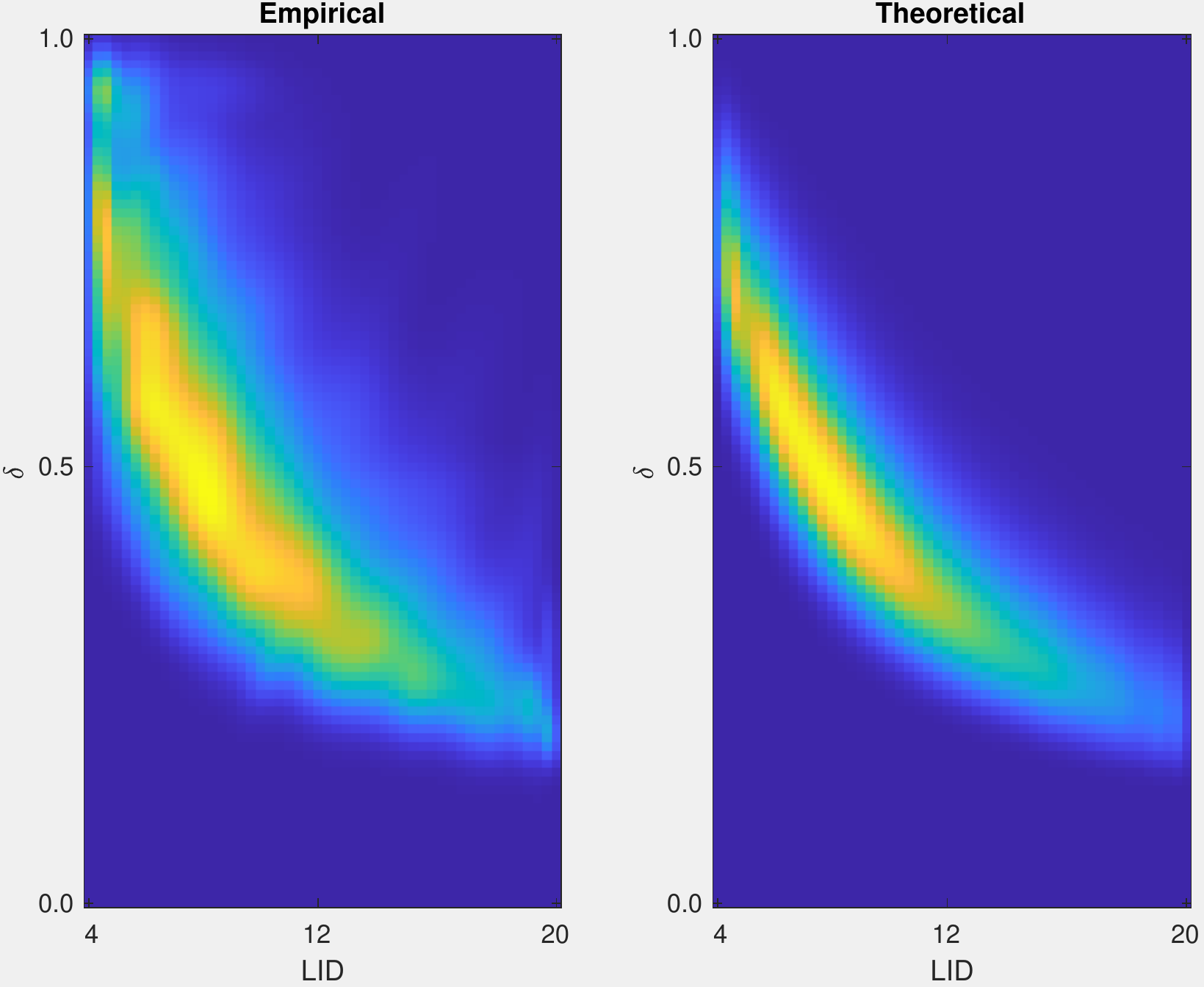}
\includegraphics[width = 0.3\textwidth,height=\rr cm]{./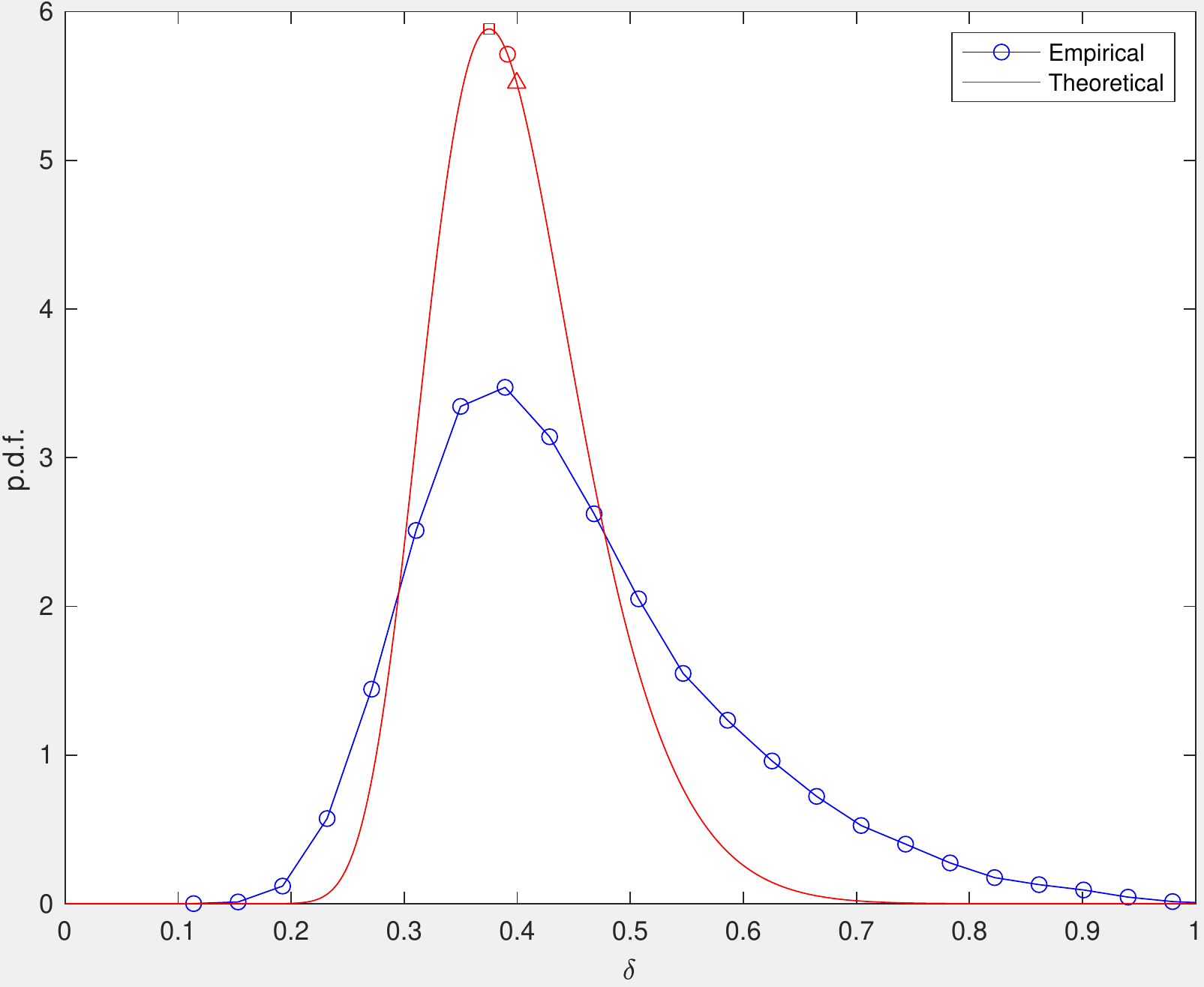}
\caption{\spacev\ with $\kx = 100$, $\kt=1$.}
%\label{fig:deepPDF}
\end{center}
\end{figure}

\begin{figure}[h]
\begin{center}
\includegraphics[width = 0.3\textwidth,height=\rr cm]{./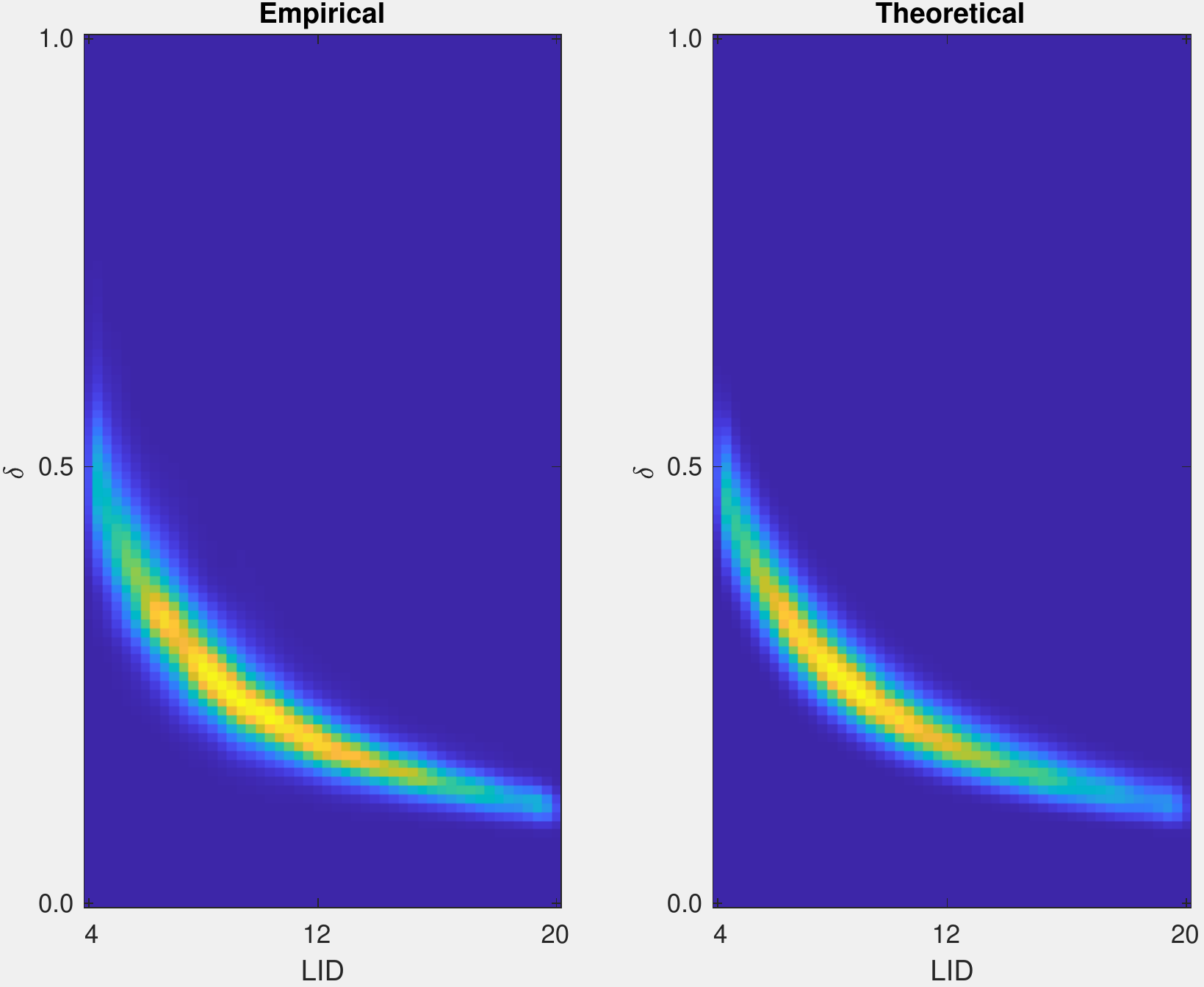}
\includegraphics[width = 0.3\textwidth,height=\rr cm]{./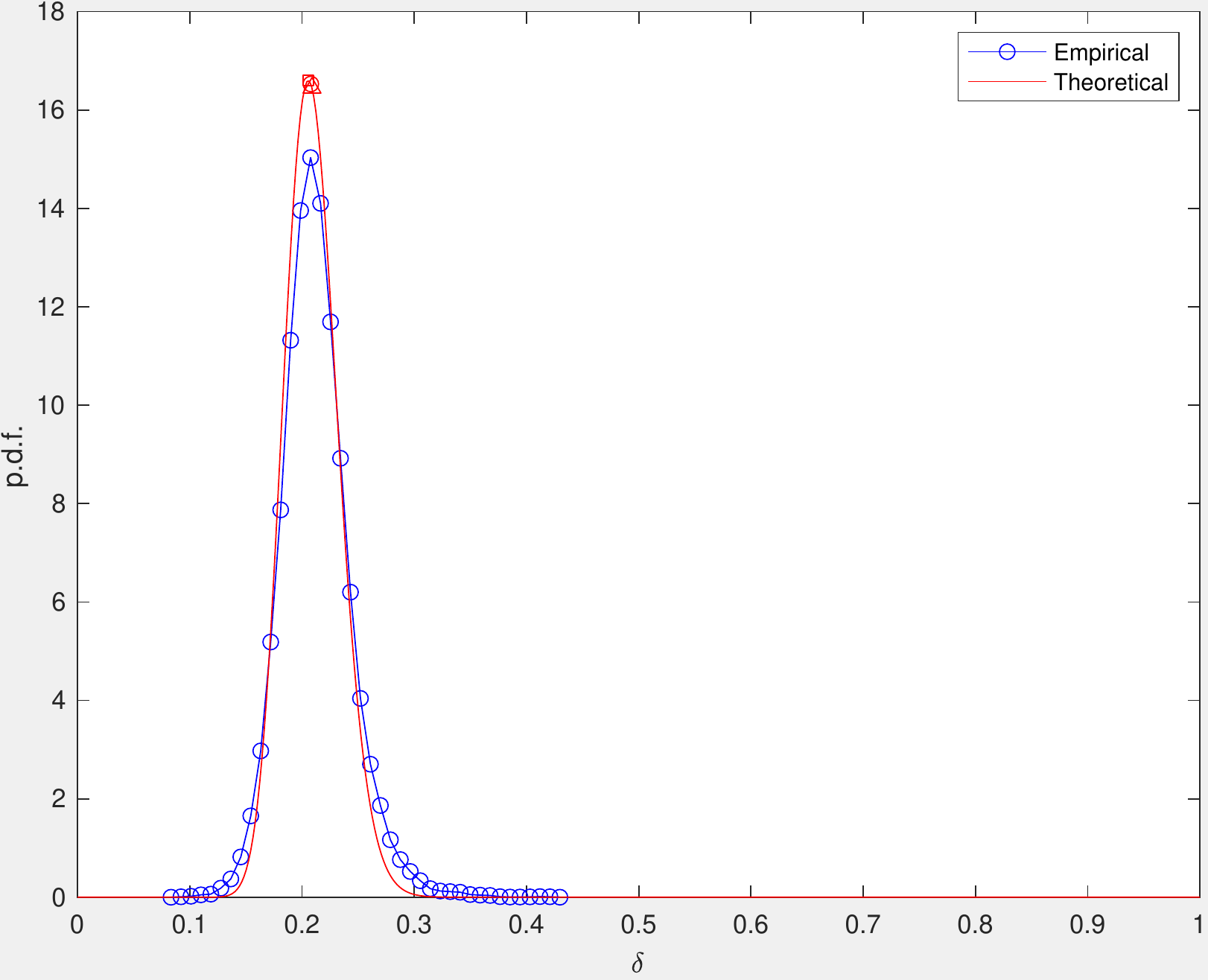}
\caption{\spacev\ with $\kx = 100$, $\kt=10$.}
%\label{fig:deepPDF}
\end{center}
\end{figure}

%\newpage
%\subsection{\turing}
\begin{figure}[h]
\begin{center}
\includegraphics[width = 0.3\textwidth,height=\rr cm]{./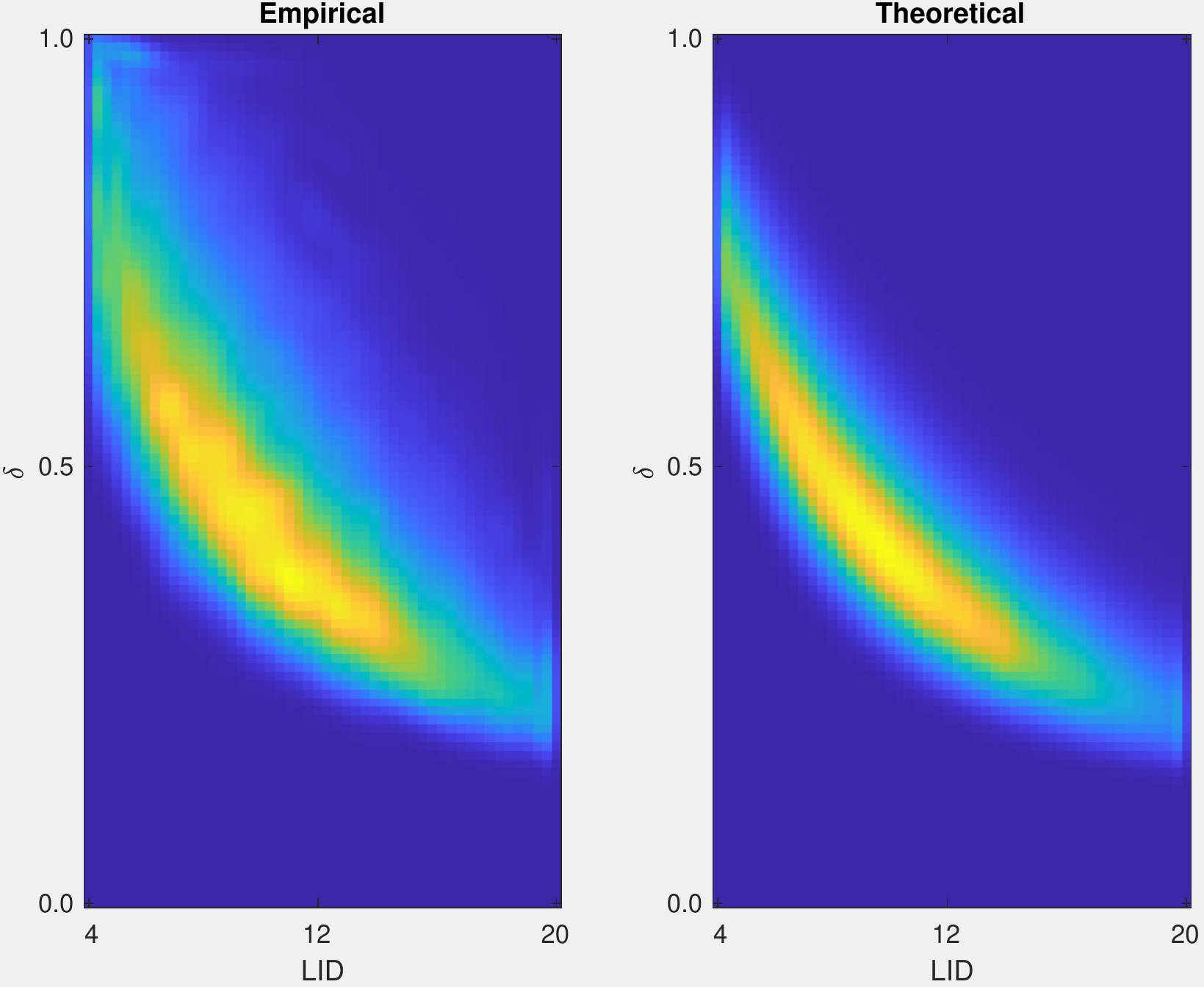}
\includegraphics[width = 0.3\textwidth,height=\rr cm]{./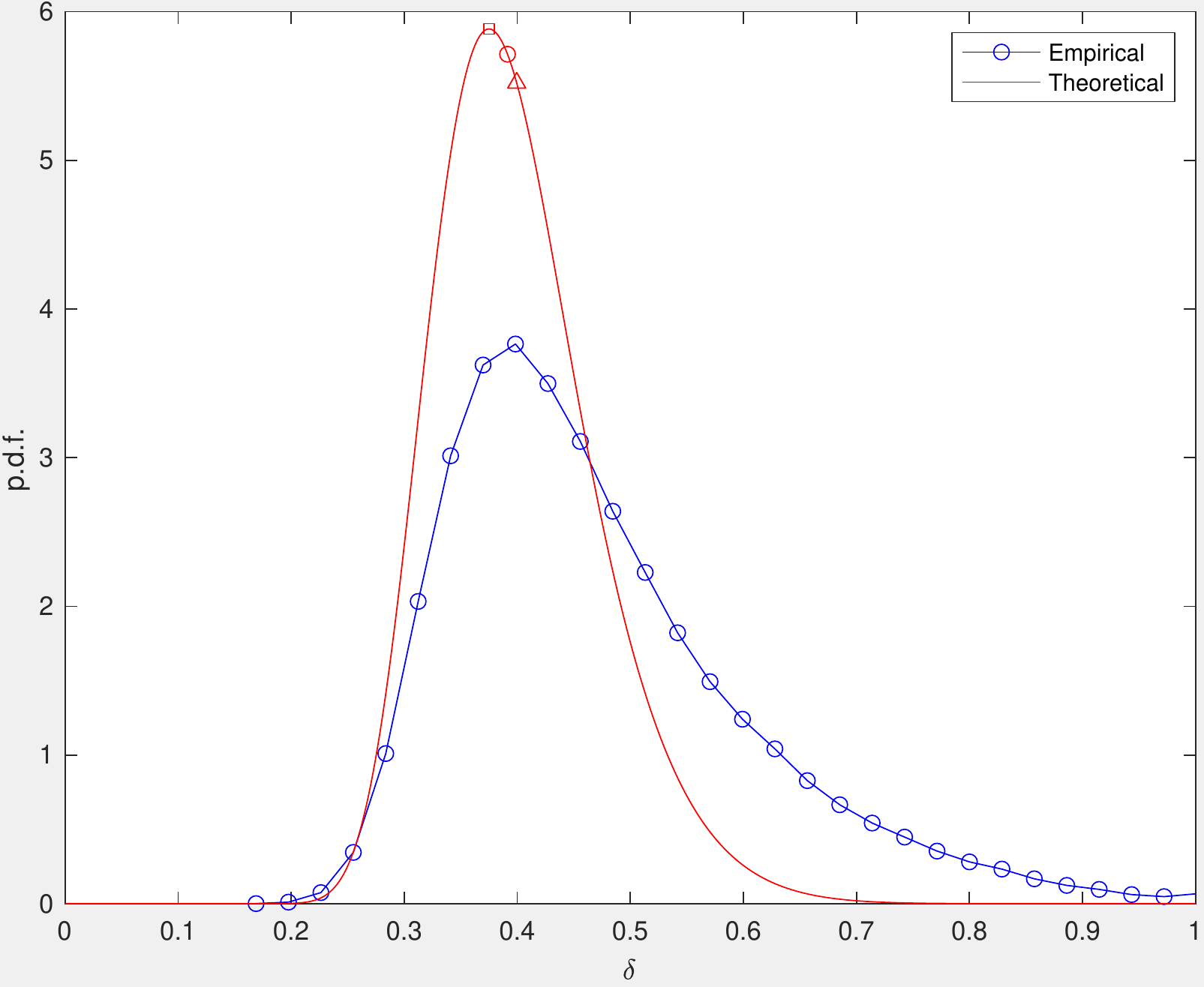}
\caption{\turing\ with $\kx = 100$, $\kt=1$.}
%\label{fig:deepPDF}
\end{center}
\end{figure}

\begin{figure}[h]
\begin{center}
\includegraphics[width = 0.3\textwidth,height=\rr cm]{./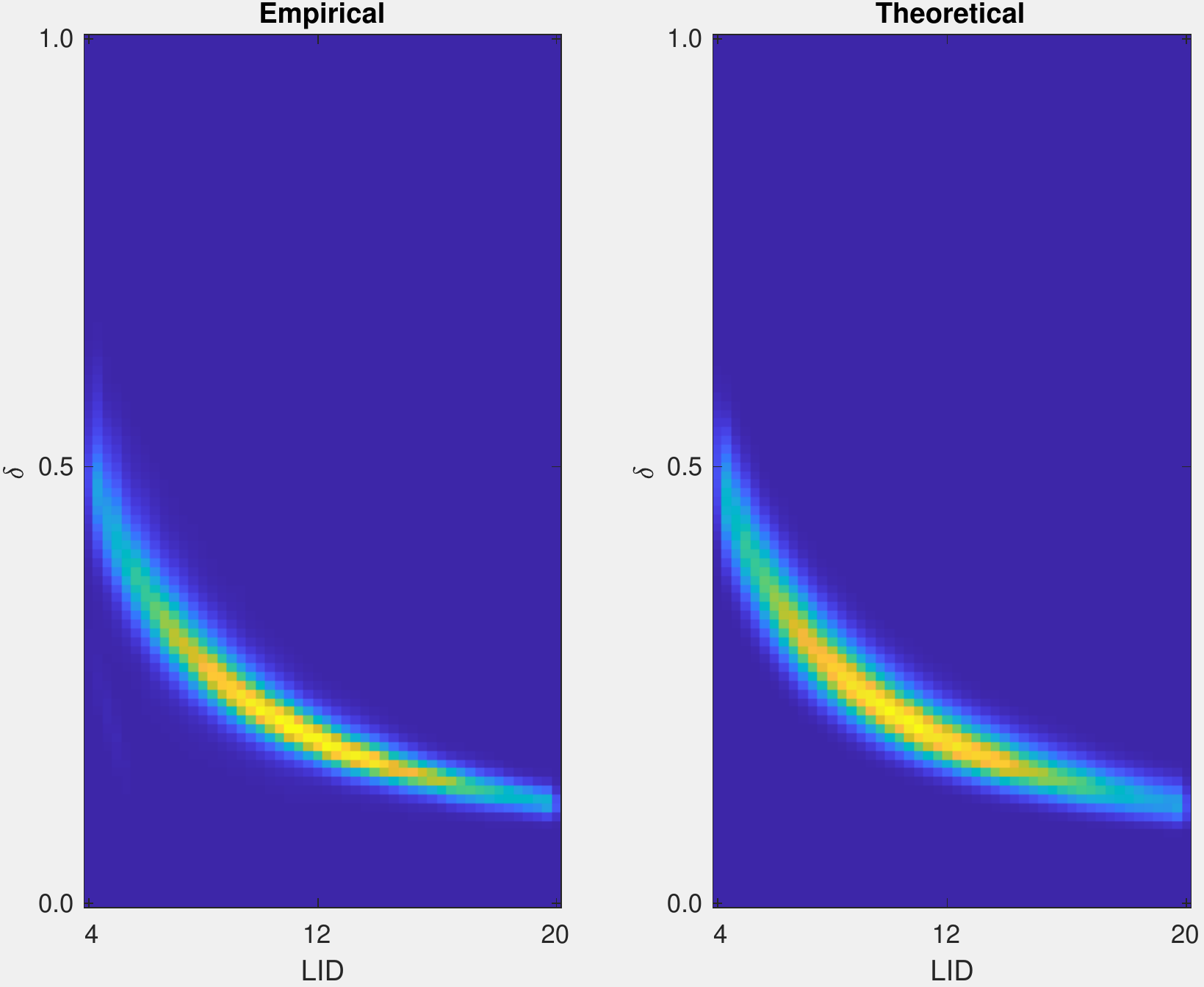}
\includegraphics[width = 0.3\textwidth,height=\rr cm]{./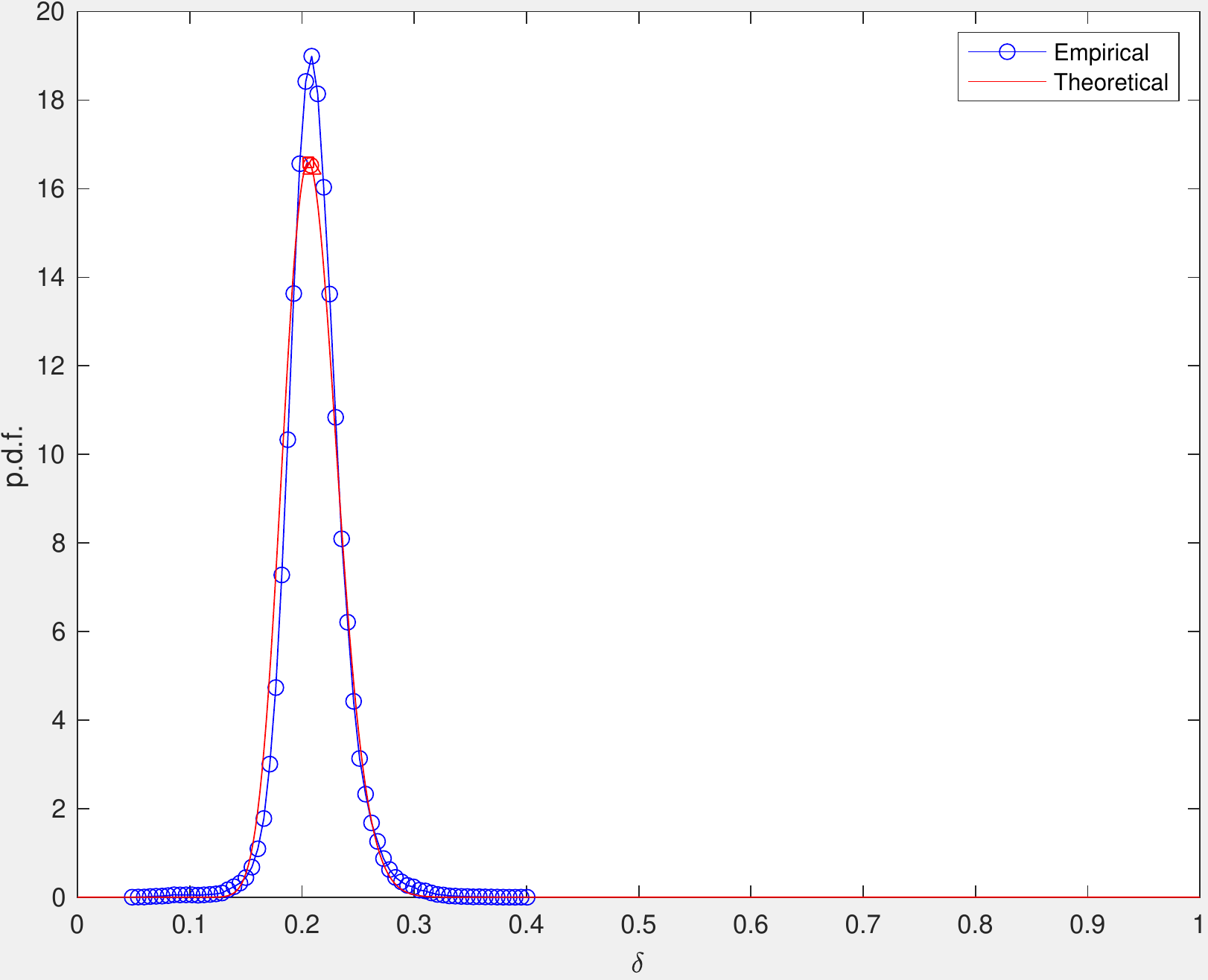}
\caption{\turing\ with $\kx = 100$, $\kt=10$.}
%\label{fig:deepPDF}
\end{center}
\end{figure}

%\newpage
%\subsection{\cifar}
\begin{figure}[h]
\begin{center}
\includegraphics[width = 0.3\textwidth,height=\rr cm]{./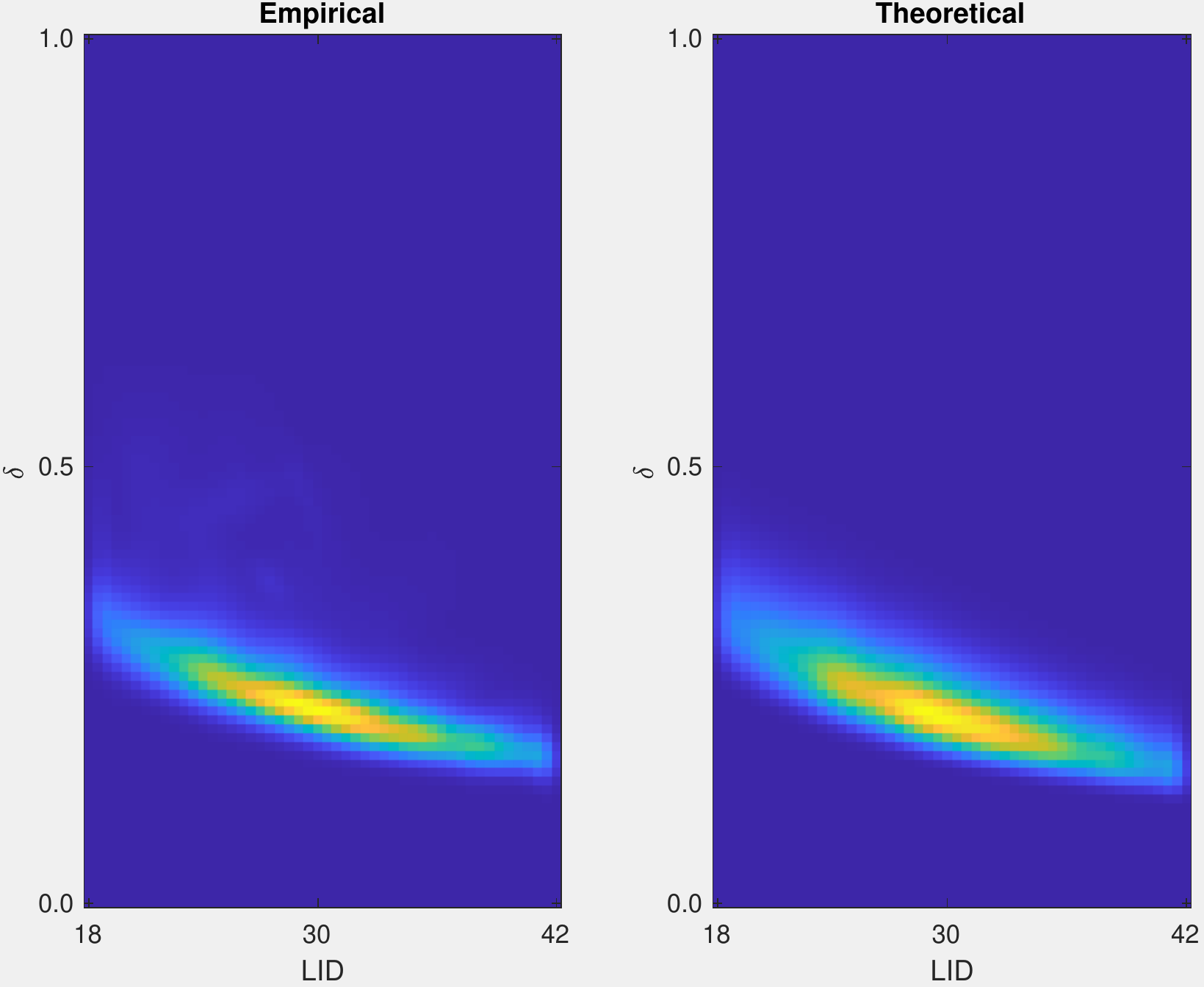}
\includegraphics[width = 0.3\textwidth,height=\rr cm]{./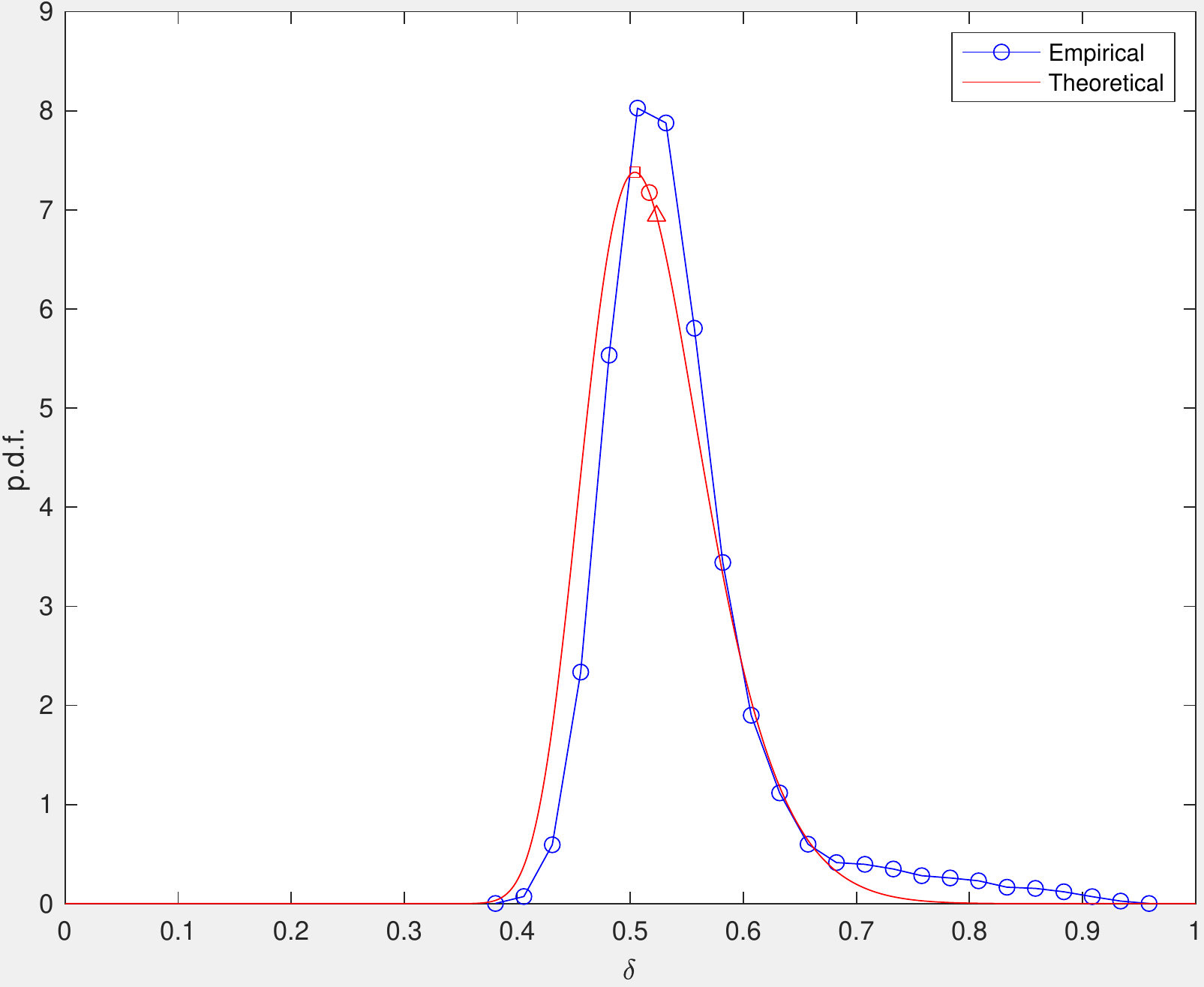}
\caption{\cifar\ with $\kx = 1000$, $\kt=1$.}
%\label{fig:cifarPDF}
\end{center}
\end{figure}

%\newpage
%\subsection{\imagenet}
\begin{figure}[h]
\begin{center}
\includegraphics[width = 0.3\textwidth,height=\rr cm]{./matlab/fig/Asymptotics/ImageNet/PDF_joint_1.pdf}
\includegraphics[width = 0.3\textwidth,height=\rr cm]{./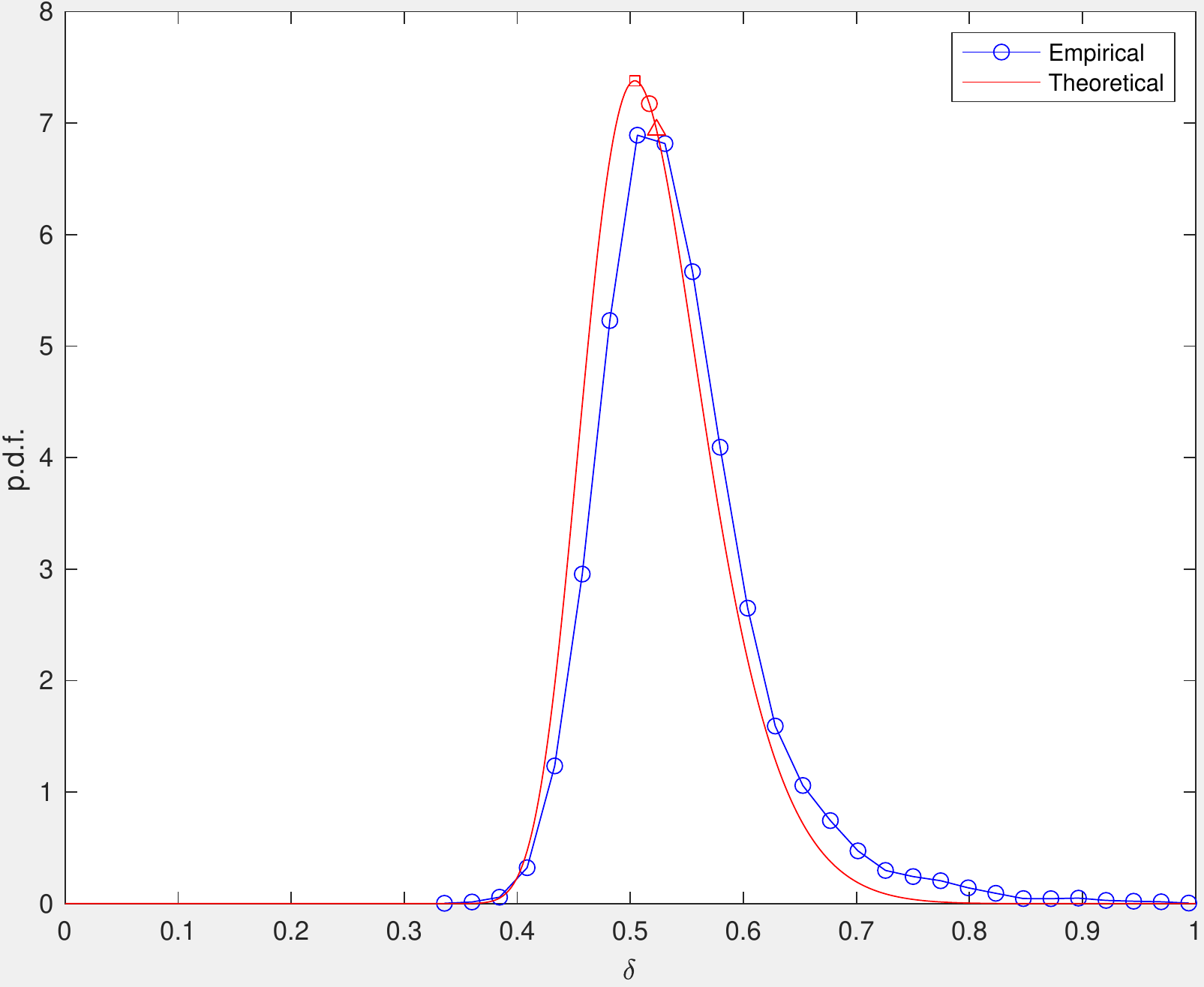}
\caption{\imagenet\ with $\kx = 1000$, $\kt=1$.}
%\label{fig:ImageNetPDF}
\end{center}
\end{figure}

\end{document}